\documentclass[11pt]{article}
\usepackage[in]{fullpage}
\usepackage[parfill]{parskip}
\usepackage{amsmath,amsthm,amssymb,bbm}
\usepackage{mathtools}
\usepackage{cases}
\usepackage{dsfont}
\usepackage{microtype}
\usepackage{subfigure}
\usepackage{algorithm,algorithmic}
\usepackage{color}
\usepackage{appendix}

\usepackage{bm}
\usepackage{subfigure}
\usepackage{algorithm,algorithmic}
\usepackage{color}
\usepackage{booktabs}       
\usepackage{appendix}
\usepackage{authblk}

\usepackage{url}
\usepackage[authoryear]{natbib}
\usepackage[colorlinks,citecolor=blue,urlcolor=blue,linkcolor=blue,linktocpage=true]{hyperref}
\pdfstringdefDisableCommands{\def\Cref#1{#1}}

\usepackage{cleveref} 
\usepackage{enumitem}

\newtheorem{theorem}{Theorem}[section]
\newtheorem{lemma}[theorem]{Lemma}

\newtheorem{corollary}[theorem]{Corollary}
\newtheorem{definition}[theorem]{Definition}

\newtheorem{assumption}[theorem]{Assumption}

\def\E{\mathbb{E}}

\begin{document}

\title{Logarithmic Switching Cost in Reinforcement Learning\\ beyond Linear MDPs}
\author[1]{Dan Qiao}
\author[1,2]{Ming Yin}
\author[1]{Yu-Xiang Wang}
\affil[1]{Department of Computer Science, UC Santa Barbara}
\affil[2]{Department of Statistics and Applied Probability, UC Santa Barbara}
\affil[ ]{\texttt{\{danqiao,ming\_yin\}@ucsb.edu}, \;
	\texttt{yuxiangw@cs.ucsb.edu}}

\date{}

\maketitle

\begin{abstract}
 In many real-life reinforcement learning (RL) problems, deploying new policies is costly. In those scenarios, algorithms must solve exploration (which requires adaptivity) while switching the deployed policy sparsely (which limits adaptivity). In this paper, we go beyond the existing state-of-the-art on this problem that focused on linear Markov Decision Processes (MDPs) by considering linear Bellman-complete MDPs with low inherent Bellman error. We propose the ELEANOR-LowSwitching algorithm that achieves the near-optimal regret with a switching cost logarithmic in the number of episodes and linear in the time-horizon $H$ and feature dimension $d$. We also prove a lower bound proportional to $dH$ among all algorithms with sublinear regret. In addition, we show the ``doubling trick'' used in ELEANOR-LowSwitching can be further leveraged for the generalized linear function approximation, under which we design a sample-efficient algorithm with near-optimal switching cost.
\end{abstract}

\newpage
\tableofcontents
\newpage

\section{Introduction}\label{sec:intro}
In many real-world reinforcement learning (RL) tasks, limited computing resources make it challenging to apply fully adaptive algorithms that continually update the exploration policy. As a surrogate, it is more cost-effective to collect data in large batches using the current policy and make changes to the policy after the entire batch is completed. For example, in a recommendation system \citep{afsar2021reinforcement}, it is easier to gather new data quickly, but deploying a new policy takes longer as it requires significant computing and human resources. Therefore, it's not feasible to switch policies based on real-time data, as typical RL algorithms would require. A practical solution is to run several experiments in parallel and make decisions on policy updates only after the entire batch has been completed. Similar limitations occur in other RL based applications such as healthcare \citep{yu2021reinforcement}, robotics \citep{kober2013reinforcement}, and new material design \citep{zhou2019optimization}, where the agent must minimize the number of policy updates while still learning an effective policy using a similar number of trajectories as fully-adaptive methods. On the theoretical side, \citet{bai2019provably} brought up the definition of \emph{switching cost}, which measures the number of policy updates. In this paper, we measure the adaptivity of online reinforcement learning algorithms via \emph{global switching cost}, and we leave the formal definition to Section \ref{sec:setup}.

In recent years, there has been a growing interest in designing online reinforcement learning algorithms with low switching costs \citep{bai2019provably,zhang2020almost,qiao2022sample,gao2021provably,wang2021provably,kong2021online,velegkas2022best}. While much progress has been made in achieving near-optimal results, most of the research has focused on the tabular MDP setting and the slightly more general linear MDP setting \citep{yang2019sample,jin2020provably}. However, linear MDP is still a restrictive model, and subsequent works have proposed a variety of more general settings, such as low inherent Bellman error \citep{zanette2020learning}, generalized linear function approximation \citep{wang2019optimism}, low Bellman rank \citep{jiang2017contextual}, low rank \citep{agarwal2020flambe}, and low Bellman eluder dimension \citep{jin2021bellman}. Therefore, it is natural to question whether reinforcement learning with low switching cost is achievable under these more general MDP settings.

\begin{table}\label{tab}
	\centering
	{
		\begin{tabular}{ |c|c|c|c| } 
			\hline
			\textit{Algorithms for regret minimization} & \textit{Setting} & \textit{Regret bound}  &  \textit{Switching cost bound} \\
			\hline 
			
			\textcolor{blue}{Our Algorithm \ref{alg:main} (Theorem \ref{thm:main})$^\dagger$} & \textcolor{blue}{Low IBE} & \textcolor{blue}{$\widetilde{O}\left(\sum_{h=1}^H d_h \sqrt{K}\right)$}  & \textcolor{blue}{$O(\sum_{h=1}^H d_h\log K)$} \\
			
			\textcolor{blue}{Our Algorithm \ref{alg:minor} (Theorem \ref{thm:minor})$^\star$} & \textcolor{blue}{GLM} & \textcolor{blue}{$\widetilde{O}\left(H\sqrt{d^3K}\right)$}  & \textcolor{blue}{$O(dH\log K)$} \\
			
			Algorithm 1 of \citet{gao2021provably}$^\ddag$ & Linear MDP & $\widetilde{O}(\sqrt{d^3H^4K})$ & $O(dH\log K)$ \\
			
			UCB-Advantage \citep{zhang2020almost} & Tabular MDP & $\widetilde{O}(\sqrt{H^3SAK})$ & $O(H^2SA\log K)^*$ \\
			
			APEVE \citep{qiao2022sample} & Tabular MDP & $\widetilde{O}(\sqrt{H^5S^2AK})$ & $O(HSA\log\log K)$ \\
			
			\hline
			\textcolor{blue}{Lower bound (Theorem \ref{thm:lower})} & \textcolor{blue}{Low IBE} & \textcolor{blue}{If ``no-regret''} & \textcolor{blue}{$\Omega(\sum_{h=1}^H d_h)$}  \\
			\hline
			\textcolor{blue}{Lower bound (Theorem \ref{thm:lower2})} & \textcolor{blue}{GLM} & \textcolor{blue}{If ``no-regret''} &  \textcolor{blue}{$\Omega(dH)$}  \\
			\hline
		\end{tabular}
	}
	\caption{Comparison of our results (in \textcolor{blue}{blue}) to existing works regarding regret bound and (global) switching cost bound. ``Low IBE'' is short for low inherent Bellman error while ``GLM'' represents generalized linear function approximation, where both settings generalize linear MDP. For both ``Low IBE'' and ``GLM'' settings, we assume the total reward is bounded by $1$. In particular, we show the regret bound for ``Low IBE'' assuming the inherent Bellman error is $0$ while the detailed result is shown in Theorem \ref{thm:main}. We highlight that our switching cost upper bounds under both settings match the corresponding lower bounds up to logarithmic factors. $^\dagger$: Here $d_h$ is the dimension of feature map for the $h$-th layer and $K$ is the number of episodes. When applied to linear MDP, there will be an additional factor of $H$ in the regret bound while $d_h=d$ for all $h$. Therefore, regret bound and switching cost bound will be $\widetilde{O}(\sqrt{d^2H^4K})$ and $O(dH\log K)$, respectively. $\star$: When applied to linear MDP, there will be an additional factor of $H$ in the regret bound, and the regret bound will be $\widetilde{O}(\sqrt{d^3H^4K})$. $\ddag$: This result is generalized by \citet{wang2021provably} whose algorithm has a same switching cost bound under this regret bound. $*$: The switching cost here is local switching cost (defined in \citet{bai2019provably}), which is specified to tabular MDP.}
\end{table}

\noindent\textbf{Our contributions.} In this paper, we extend previous results under linear MDP to its two natural extensions, linear Bellman-complete MDPs with low inherent Bellman error \citep{zanette2020learning} and MDP with genaralized linear function approximation \citep{wang2019optimism}. Under both settings, we design algorithms with near optimal regret and switching cost. Our contributions are three-fold and summarized as below. 

\begin{itemize}
	\item A new algorithm (Algorithm \ref{alg:main}) based on ``doubling trick'' for regret minimization under the low inherent Bellman error setting that achieves global switching cost of $O(\sum_{h=1}^H d_h\log K)$ and regret of $\widetilde{O}\left(\sum_{h=1}^H d_h \sqrt{K}+\sum_{h=1}^H \sqrt{d_h}\mathcal{I}K\right)$, where $d_h$ is the dimension of feature map for the $h$-th layer, $\mathcal{I}$ is the inherent Bellman error and $K$ is the number of episodes (Theorem \ref{thm:main}). The regret bound is known to be minimax optimal \citep{zanette2020learning}.
	
	\item When the inherent Bellman error $\mathcal{I}=0$, we prove a nearly matching switching cost lower bound (Theorem \ref{thm:lower}) $\Omega(\sum_{h=1}^H d_h)$ for any algorithm with sub-linear regret bound, which implies that the switching cost of our Algorithm \ref{alg:main} is optimal up to $\log K$ factor. When applied to linear MDP, Algorithm \ref{alg:main} achieves the same switching cost and better regret bound compared to the previous results \citep{gao2021provably,wang2021provably}.
	
	\item We leverage the ``doubling trick'' used in Algorithm \ref{alg:main} under the generalized linear function approximation setting and propose Algorithm \ref{alg:minor} which achieves switching cost of $O(dH\log K)$ and regret of $\widetilde{O}\left(H\sqrt{d^3K}\right)$, where $d$ is the dimension of feature map (Theorem \ref{thm:minor}). We also prove a nearly matching switching cost lower bound of $\Omega(dH)$ for any algorithm with sub-linear regret bound (Theorem \ref{thm:lower2}). The pair of results strictly generalize previous results under linear MDP \citep{gao2021provably,wang2021provably}. 
\end{itemize}

\subsection{Related works}
There is a large and growing body of literature on the statistical theory of reinforcement learning that we will not attempt to thoroughly review. Detailed comparisons with existing work on reinforcement learning with low switching cost \citep{gao2021provably,wang2021provably,zhang2020almost,qiao2022sample} are given in Table \ref{tab}. Notably, the settings we consider are more general than the well studied tabular or linear MDP, while our results for regret and switching cost are comparable or better than the best known results under linear MDP \citep{gao2021provably,wang2021provably}. While there are low adaptive algorithms under other more general settings than linear MDP, they either consider only pure exploration (without regret guarantee) \citep{jiang2017contextual,sun2019model}, or suffer from sub-optimal results comparing to our results \citep{kong2021online,velegkas2022best}.

In addition to switching cost, there are other measurements of adaptivity. The closest measurement is batched learning, which requires decisions about policy updates to be made at only a few (often predefined) checkpoints but does not constrain the number of policy switches. Batched learning has been considered both under bandits \citep{perchet2016batched,gao2019batched} and RL \citep{wang2021provably,qiao2022sample,zhang2022near} while the settings are restricted to tabular MDP or linear MDP. Meanwhile, \citet{matsushima2020deployment} proposed the notion of \emph{deployment efficiency}, which is similar to batched RL with additional requirement that each policy deployment should have similar size. Deployment efficient RL is studied by some following works \citep{huang2021towards,qiao2022near,modi2021model}. However, as pointed out by \citet{qiao2022near}, deployment complexity is not a good measurement of adaptivity when studying regret minimization.

Technically speaking, we directly base on ELEANOR \citep{zanette2020learning} and Algorithm 1 of \citet{wang2019optimism}, which admit fully adaptive structure. We apply ``doubling trick'' when deciding whether to update the exploration policy, in order to achieve low switching cost. In particular, we show that the ``information gain'' used in previous works under linear MDP \citep{gao2021provably,wang2021provably}: the determinant of empirical covariance matrix can be extended to more general MDPs with linear approximation. Therefore, we only update the exploration policy when the ``information gain'' doubles, and the switching cost depends only logarithmically on the number of episodes $K$. 

\section{Problem setup}\label{sec:setup}

\noindent\textbf{Notations.} Throughout the paper, for $n\in\mathbb{Z}^{+}$, $[n]=\{1,2,\cdots,n\}$. We denote $\|x\|_{\Lambda}=\sqrt{x^\top \Lambda x}$.  For matrix $X\in\mathbb{R}^{d\times d}$, $\|\cdot\|_2$, $\det(\cdot)$, $\lambda_{\min}(\cdot)$, $\lambda_{\max}(\cdot)$ denote the operator norm, determinant, smallest eigenvalue and largest eigenvalue, respectively. In addition, we use standard notations such as $O$ and $\Omega$ to absorb constants while $\widetilde{O}$ and $\widetilde{\Omega}$ suppress logarithmic factors.

\noindent\textbf{Markov Decision Processes.} We consider finite-horizon episodic \emph{Markov Decision Processes} (MDP) with non-stationary transitions, denoted by a tuple $\mathcal{M}=(\mathcal{S}, \mathcal{A}, H, P_h, r_h)$ \citep{sutton1998reinforcement}, where $\mathcal{S}$ is the state space, $\mathcal{A}$ is the action space and $H$ is the horizon. The non-stationary transition kernel has the form $P_h:\mathcal{S}\times\mathcal{A}\times\mathcal{S} \mapsto [0, 1]$  with $P_{h}(s^{\prime}|s,a)$ representing the probability of transition from state $s$, action $a$ to next state $s'$ at time step $h$. In addition, $r_h(s,a)\in\Delta([0,1])$ denotes the corresponding distribution of reward.\footnote{We overload the notation $r$ so that $r$ also denotes the expected (immediate) reward function.} Without loss of generality, we assume there is a fixed initial state $s_{1}$.\footnote{The generalized case where the initial distribution is an arbitrary distribution can be recovered from this setting by adding one layer to the MDP.} A policy can be seen as a series of mapping $\pi=(\pi_1,\cdots,\pi_H)$, where each $\pi_h$ maps each state $s \in \mathcal{S}$ to a probability distribution over actions, \emph{i.e.} $\pi_h: \mathcal{S}\rightarrow \Delta(\mathcal{A})$, $\forall\, h\in[H]$. A random trajectory $ (s_1, a_1, r_1, \cdots, s_H,a_H,r_H,s_{H+1})$ is generated by the following rule: $s_1$ is fixed, $a_h \sim \pi_h(\cdot|s_h), r_h \sim r_h(s_h, a_h), s_{h+1} \sim P_h (\cdot|s_h, a_h), \forall\, h \in [H]$. For normalization, we assume that $\sum_{h=1}^{H} r_h\in[0,1]$ almost surely.

\noindent\textbf{$Q$-values, Bellman operator.} Given a policy $\pi$ and any $h\in[H]$, the value function $V^\pi_h(\cdot)$ and Q-value function $Q^\pi_h(\cdot,\cdot)$ are defined as:
$
V^\pi_h(s)=\mathbb{E}_\pi[\sum_{t=h}^H r_{t}|s_h=s] ,
Q^\pi_h(s,a)=\mathbb{E}_\pi[\sum_{t=h}^H  r_{t}|s_h,a_h=s,a],\;\forall\, s,a\in\mathcal{S}\times\mathcal{A}.
$ Besides, the value function and Q-value function with respect to the optimal policy $\pi^\star$ is denoted by $V^\star_h(\cdot)$ and $Q^\star_h(\cdot,\cdot)$.  
Then the Bellman operator $\mathcal{T}_h$ applied to $Q_{h+1}$ is defined as 
\begin{align*}
\mathcal{T}_h(Q_{h+1})(s,a)=r_h(s,a)+\E_{s^\prime\sim P_h(\cdot|s,a)}\max_{a^\prime}Q_{h+1}(s^\prime,a^\prime).
\end{align*}

\noindent\textbf{Regret.} We measure the performance of online reinforcement learning algorithms by the regret. The regret of an algorithm over $K$ episodes is defined as
$$\text{Regret}(K) := \sum_{k=1}^{K}[V_{1}^\star(s_{1})-V_{1}^{\pi_{k}}(s_{1})],$$
where $\pi_{k}$ is the policy it deploys at episode $k$. Besides, we denote the total number of steps by $T := KH$.

\noindent\textbf{Switching cost.} We adopt the global switching cost \citep{bai2019provably}, which simply measures how many times the algorithm changes its policy:
$$N_{switch} := \sum_{k=1}^{K-1} \mathds{1}\{\pi_{k}\neq \pi_{k+1}\} .$$
Global switching cost is a widely applied measurement of the adaptivity of an online RL algorithm both under the tabular setting \citep{bai2019provably,zhang2020almost,qiao2022sample} and the linear MDP setting \citep{gao2021provably,wang2021provably}. Similar to previous works, our algorithm also uses deterministic policies only.

\subsection{Low inherent Bellman error}\label{sec:setting}
In this part, we introduce the linear function approximation, the definition of inherent Bellman error \citep{zanette2020learning} and the connection between the low inherent Bellman error setting and the linear MDP setting \citep{jin2020provably}.

To encode linear function approximation of the state space $\mathcal{S}$, a common approach is to define a feature map $\phi_h:\mathcal{S}\times\mathcal{A}\rightarrow \mathbb{R}^{d_h}$, which can be different across different timestep. Then the $Q$-value functions are represented as linear functions of $\phi_h$, \emph{i.e.}, $Q_h(s,a)=\phi_h(s,a)^\top \theta_h$ for some $\theta_h\in\mathbb{R}^{d_h}$.

The feasible parameter class for timestep $h$ is defined as 
$$\mathcal{B}_h:=\{\theta_h\in\mathbb{R}^{d_h}|\;|\phi_h(s,a)^\top \theta_h|\leq 1,\;\forall\;(s,a)\},$$
which is consistent with our assumption that $Q^\pi_h(s,a)\leq 1$.

For each feasible parameter $\theta\in\mathcal{B}_h$, the corresponding $Q$-value function and value function are defined as 
$$Q_h(\theta)(s,a)=\phi_h(s,a)^\top \theta,\;\; V_h(\theta)(s)=\max_a \phi_h(s,a)^\top \theta.$$
Meanwhile, the associated function spaces are
$$\mathcal{Q}_h:=\{Q_h(\theta_h)\,|\,\theta_h\in\mathcal{B}_h\},\;\mathcal{V}_h:=\{V_h(\theta_h)\,|\,\theta_h\in\mathcal{B}_h\}.$$

Similar to \citet{zanette2020learning}, we make the following normalization assumption, which is without loss of generality.
$$\|\phi_h(s,a)\|_2\leq1,\;\;\forall\,(h,s,a)\in[H]\times\mathcal{S}\times\mathcal{A}.$$
$$\|\theta_h\|_2\leq\sqrt{d_h},\;\;\forall\,h\in[H],\,\theta_h\in\mathcal{B}_h.$$

\noindent\textbf{Inherent Bellman error.} For provably efficient learning, completeness assumption is widely adopted \citep{zanette2020learning,wang2020reinforcement,jin2021bellman}. In this paper, we characterize the completeness by assuming an upper bound of the projection error when we project $\mathcal{T}_h Q_{h+1}$ ($Q_{h+1}\in\mathcal{Q}_{h+1}$) to $\mathcal{Q}_h$. Formally, we have the following definition of inherent Bellman error.

\begin{definition}
	The inherent Bellman error of an MDP with a known linear feature map $\{\phi_h(\cdot,\cdot)\}_{h\in[H]}$ is defined as the maximum over the timesteps $h\in[H]$ of
	\begin{equation*}
	\sup_{\theta_{h+1}\in\mathcal{B}_{h+1}}\inf_{\theta_h\in\mathcal{B}_h}\sup_{s,a}|\phi_h(s,a)^\top \theta_h-(\mathcal{T}_h Q_{h+1}(\theta_{h+1}))(s,a)|.
	\end{equation*}
\end{definition}

Similar to \citet{zanette2020learning}, we assume the inherent Bellman error of the MDP is upper bounded by some (known) constant $\mathcal{I}\geq 0$. Below we will show that this setting strictly generalizes the linear MDP setting \citep{jin2020provably}.

\noindent\textbf{Connections to linear MDP.} Since linear MDP admits transition kernel and reward function that is linear in a known feature map $\phi$, for any function $V(\cdot):\mathcal{S}\rightarrow \mathbb{R}$, $\mathcal{T}_h V(\cdot,\cdot)$ is a linear function of $\phi(\cdot,\cdot)$ \citep{jin2020provably}. Therefore, a linear MDP with feature map $\phi$ and dimension $d$ is a special case of the low inherent Bellman error setting with $\mathcal{I}=0$, $\phi_1=\cdots=\phi_H=\phi$ and $d_1=\cdots=d_H=d$ (if ignoring the scale of rewards). More importantly, it is shown that an MDP with zero inherent Bellman error ($\mathcal{I}=0$) may not be a linear MDP \citep{zanette2020learning}, which means that the setting in this paper is strictly more general and technically demanding than linear MDP. For more discussions about the low inherent Bellman error setting and relavent comparisons, please refer to Section 3 in \citet{zanette2020learning}.

\section{Main algorithm}
In this section, we propose our main algorithm: ELEANOR-LowSwitching (Algorithm \ref{alg:main}) and the low switching design for global optimism-based algorithms.  

We begin with the standard LSVI technique. At the beginning of the $k$-th episode, assume the parameter for the $(h+1)$-th layer is fixed to be $\theta_{h+1}$. Then LSVI minimizes the following objective function with respect to $\theta$:

\begin{equation}\label{equ:lsvi}
\sum_{\tau=1}^{k-1}\left((\phi_h^\tau)^\top\theta-r_h^\tau-V_{h+1}(\theta_{h+1})(s_{h+1}^\tau)\right)^2+\lambda\|\theta\|_2^2,
\end{equation}

where $\phi_h^\tau$ is short for $\phi_h(s_h^\tau,a_h^\tau)$ and $r_h^\tau$ is the reward encountered at layer $h$ of the $\tau$-th episode. The minimization problem \eqref{equ:lsvi} has a closed form solution:

\begin{equation}
\widehat{\theta}_h=(\Sigma_h^k)^{-1}\sum_{\tau=1}^{k-1}\phi_h^\tau\left[r_h^\tau+V_{h+1}(\theta_{h+1})(s_{h+1}^\tau)\right],
\end{equation}
where $\Sigma_h^k=\sum_{\tau=1}^{k-1}\phi_h^\tau(\phi_h^\tau)^\top+\lambda I_{d_h}$ is the empirical covariance matrix. 

Based on the standard LSVI, we introduce the global optimistic planning below, where an optimization problem is solved to derive the most optimistic estimate of the Q-value function at the initial state. At each episode where the policy is updated, Algorithm \ref{alg:main} solves the following problem. 

\begin{definition}[Optimistic planning]\label{def:optimization}
	\begin{equation*}
	\begin{split}
	&\max_{\{\bar{\xi}_h\}_{h\in[H]},\{\widehat{\theta}_h\}_{h\in[H]},\{\bar{\theta}_h\}_{h\in[H]}} \max_{a}\phi_1(s_1,a)^\top \bar{\theta}_1\;\;\;\text{subject to}\\
	&\widehat{\theta}_h=(\Sigma_h^k)^{-1}\sum_{\tau=1}^{k-1}\phi_h^\tau\left(r_h^\tau+V_{h+1}(\bar{\theta}_{h+1})(s_{h+1}^\tau)\right),\\
	&\bar{\theta}_h=\widehat{\theta}_h+\bar{\xi}_h;\;\;\;\|\bar{\xi}_h\|_{\Sigma_h^k}\leq \sqrt{\alpha_h^k};\;\;\;\bar{\theta}_h\in\mathcal{B}_h.
	\end{split}
	\end{equation*}
\end{definition}

Definition \ref{def:optimization} optimizes over the perturbation $\bar{\xi}_h$ added to the least square solution $\widehat{\theta}_h$. The constraint on $\bar{\xi}_h$ is
\begin{equation}
\|\bar{\xi}_h\|_{\Sigma_h^k}\leq \sqrt{\alpha_h^k} := \widetilde{O}(\sqrt{d_h+d_{h+1}})+\sqrt{k}\mathcal{I},
\end{equation}
where the definition of $\sqrt{\alpha_h^k}$ will be specified in Appendix \ref{sec:pr}. As will be shown in the analysis, the first term accounts for the estimation error of the LSVI, while the second term accounts for the model misspecification (recall that $\mathcal{I}$ is inherent Bellman error). Finally, with high probability, there will be a valid solution of the optimization problem (details in Appendix \ref{sec:pr}), and therefore Algorithm \ref{alg:main} is well posed.

\noindent\textbf{About global optimism.} We highlight that the optimization problem aims at being optimistic only at the initial state instead of choosing a value function everywhere optimistic, as in LSVI-UCB \citep{jin2020provably}. Such global optimism effectively keeps the linear structure of our function class and reduces the dimension of the covering set, since we do not need to cover the quadratic bonus as in \citet{jin2020provably}. 

\noindent\textbf{Algorithmic design.} We present the whole learning process
in Algorithm \ref{alg:main}. For linear function approximation, we characterize the ``information gain'' (the information we learned from interacting with the MDP) through the determinant of the empirical covariance matrix $\Sigma_h^k$ (line 5). To achieve low switching cost, we only update the exploration policy when the ``information gain'' doubles for some layer $h\in[H]$ (line 7), and each update means the information about some layer has doubled. As will be shown later, such ``doubling schedule'' will lead to a switching cost depending only logarithmically on $K$, in stark contrast to its fully adaptive counterpart: ELEANOR \citep{zanette2020learning}. When an update occurs, Algorithm \ref{alg:main} solves the optimization problem to derive $\{\bar{\theta}_h\}_{h\in[H]}$ ensuring global optimism (line 9), takes the greedy policy with respect to $\phi_h(\cdot,\cdot)^\top \bar{\theta}_h^k$ (line 10) and updates the empirical covariance matrix (line 11).  

\begin{algorithm}[tbh]
	\caption{ELEANOR-LowSwitching}\label{alg:main}
	\begin{algorithmic}[1]
		\STATE \textbf{Input}: Number of episodes $K$, regularization $\lambda=1$, feature map $\{\phi_h(\cdot,\cdot)\}_{h\in[H]}$, failure probability $\delta$, initial state $s_1$ and inherent Bellman error $\mathcal{I}$.
		\STATE \textbf{Initialize}: $\Sigma_h=\Sigma_h^0=\lambda I_{d_h}$, for all $h\in[H]$.
		\FOR{$k=1,2,\cdots,K$}  
		\FOR{$h=1,2,\cdots,H$}
		\STATE $\Sigma_h^k=\sum_{\tau=1}^{k-1}\phi_h(s_h^\tau,a_h^\tau)\phi_h(s_h^\tau,a_h^\tau)^\top+\lambda I_{d_h}.$
		\ENDFOR
		\IF{$\exists\,h\in[H]$, $\det(\Sigma_h^k)\geq 2\det(\Sigma_h)$}
		\STATE Set $\bar{\theta}_{H+1}^k=\widehat{\theta}_{H+1}^k=\bar{\xi}_{H+1}^k=0$.
		\STATE Solve the optimization problem in Definition \ref{def:optimization}.
		\STATE Set $\pi_h^k(s)=\arg\max_{a}\phi_h(s,a)^\top \bar{\theta}_h^k$, $\forall\,h\in[H]$.
		\STATE Set $\Sigma_h=\Sigma_h^k$, $\forall\,h\in[H]$.
		\ELSE 
		\STATE Set $\pi_h^k=\pi_h^{k-1}$ for all $h\in[H]$.
		\ENDIF
		\STATE Deploy policy $\pi_k=(\pi^k_1,\cdots,\pi^k_H)$ and get trajectory $(s_1^k,a_1^k,r_1^k,\cdots,s_{H+1}^k)$.
		\ENDFOR
	\end{algorithmic}
\end{algorithm}

\noindent\textbf{Generalization over previous algorithms.} If we remove the update rule in Algorithm \ref{alg:main} and solve Definition \ref{def:optimization} at all episodes, our Algorithm \ref{alg:main} will degenerate to ELEANOR \citep{zanette2020learning}. Compared to ELEANOR, our Algorithm \ref{alg:main} achieves the same regret bound (shown later) and near optimal switching cost. Meanwhile, Algorithm \ref{alg:main} also strictly generalizes the RARELY SWITCHING OFUL algorithm \citep{abbasi2011improved} designed for linear bandits. Taking $H=1$, both our Algorithm \ref{alg:main} and our guarantees (for regret and switching cost) strictly subsumes the RARELY SWITCHING OFUL. In conclusion, we show that low switching cost is possible for RL algorithms with global optimism.

\noindent\textbf{Computational efficiency.} Although Algorithm \ref{alg:main} is shown to be near optimal both in regret and switching cost, the implementation of the optimization problem is inefficient in general. This is because the max operator breaks the quadratic structure of the constraints. 
Such issue also exists for our fully adaptive counterpart: ELEANOR \citep{zanette2020learning}, and other algorithms based on global optimism \citep{jiang2017contextual,sun2019model,jin2021bellman}. We leave the improvement of computation as future work.

\section{Main results}
In this section, we present our main results. We begin with the upper bounds for regret and switching cost. Recall that we assume $\sum_{h=1}^{H} r_h\in[0,1]$ almost surely, while $d_h$ represents the dimension of the feature map for the $h$-th layer and $\mathcal{I}$ is inherent Bellman error.

\begin{theorem}[Main theorem]\label{thm:main}
	The global switching cost of Algorithm \ref{alg:main} is bounded by $O(\sum_{h=1}^H d_h\cdot \log K)$. In addition, with probability $1-\delta$, the regret of Algorithm \ref{alg:main} over $K$ episodes is bounded by
	$$\text{Regret}(K)\leq \widetilde{O}\left(\sum_{h=1}^H d_h \sqrt{K}+\sum_{h=1}^H \sqrt{d_h}\mathcal{I}K\right).$$
\end{theorem}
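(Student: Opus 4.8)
The plan is to prove the two claims separately: the switching cost bound follows from a determinant-potential argument, while the regret bound adapts the analysis of ELEANOR \citep{zanette2020learning} with one extra step that absorbs the ``staleness'' of the frozen covariance matrices. First I would bound the number of times a fixed layer $h$ triggers the doubling test in line~7. Since $\|\phi_h(s,a)\|_2\le 1$, we have $\mathrm{tr}(\Sigma_h^k)\le \lambda d_h + (k-1)\le \lambda d_h + K$, so by AM--GM $\det(\Sigma_h^k)\le (\mathrm{tr}(\Sigma_h^k)/d_h)^{d_h}\le ((\lambda d_h+K)/d_h)^{d_h}$. Because $\det(\Sigma_h^0)=\lambda^{d_h}$ and every update triggered by layer $h$ at least doubles its frozen determinant, the number of layer-$h$ triggers is at most $\log_2(\det(\Sigma_h^K)/\det(\Sigma_h^0)) = O(d_h\log K)$. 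Summing over $h\in[H]$, and noting that each policy switch is caused by at least one triggering layer, gives the global switching cost $O(\sum_{h=1}^H d_h\log K)$.

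For the regret I would first condition on a high-probability event, established by a self-normalized/elliptical concentration bound together with a covering argument over $\mathcal V_{h+1}$, on which the least-squares backup of every $V_{h+1}(\bar\theta_{h+1})\in\mathcal V_{h+1}$ lies within the ellipsoid $\{\|\xi\|_{\Sigma_h^k}\le\sqrt{\alpha_h^k}\}$ up to the inherent-Bellman-error slack; this is exactly what the calibration $\sqrt{\alpha_h^k}=\widetilde O(\sqrt{d_h+d_{h+1}})+\sqrt{k}\,\mathcal I$ is designed for. On this event the program in Definition~\ref{def:optimization} is feasible and optimistic at the root, $\max_a\phi_1(s_1,a)^\top\bar\theta_1^{k}\ge V_1^\star(s_1)$. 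Writing $k'$ for the most recent update at or before $k$, the deployed $\pi_k$ is greedy with respect to $\{\bar\theta_h^{k'}\}$ and uses the frozen $\{\Sigma_h^{k'}\}$, so optimism gives $V_1^\star(s_1)-V_1^{\pi_k}(s_1)\le \bar V_1^{k'}(s_1)-V_1^{\pi_k}(s_1)$. Unrolling this through the value-difference (simulation) lemma along the trajectory of $\pi_k$ makes it a telescoping sum whose per-layer term is the gap between $\phi_h^\top\bar\theta_h^{k'}$ and the true backup $\mathcal T_h$; by the ellipsoid constraint each gap is at most $\sqrt{\alpha_h^{k'}}\,\|\phi_h(s_h^k,a_h^k)\|_{(\Sigma_h^{k'})^{-1}}+\mathcal I$, so $\text{Regret}(K)\le \widetilde O\big(\sum_{h=1}^H\sum_{k=1}^K \sqrt{\alpha_h^{k'}}\,\|\phi_h^k\|_{(\Sigma_h^{k'})^{-1}}\big)+O(H\mathcal I K)$, with martingale fluctuations controlled by Azuma--Hoeffding.

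The only place low switching enters is that this bonus uses the frozen $\Sigma_h^{k'}$ rather than the current $\Sigma_h^k$. Here I would invoke the determinant-comparison lemma \citep{abbasi2011improved}: since $\Sigma_h^{k'}\preceq\Sigma_h^k$ and the doubling rule forces $\det(\Sigma_h^k)\le 2\det(\Sigma_h^{k'})$, one has $\|x\|_{(\Sigma_h^{k'})^{-1}}^2\le (\det(\Sigma_h^k)/\det(\Sigma_h^{k'}))\,\|x\|_{(\Sigma_h^k)^{-1}}^2\le 2\|x\|_{(\Sigma_h^k)^{-1}}^2$ for every $x$, replacing the stale bonus by the up-to-date one at the cost of a factor $\sqrt2$. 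Then the elliptical potential lemma gives $\sum_{k=1}^K\|\phi_h^k\|_{(\Sigma_h^k)^{-1}}^2=O(d_h\log K)$, and Cauchy--Schwarz yields $\sum_{k=1}^K\|\phi_h^k\|_{(\Sigma_h^k)^{-1}}=\widetilde O(\sqrt{d_h K})$. Substituting $\sqrt{\alpha_h^{k'}}=\widetilde O(\sqrt{d_h+d_{h+1}})+\sqrt{k}\,\mathcal I$ and using $\sqrt{k}\le\sqrt K$, the first part contributes $\widetilde O(\sum_h d_h\sqrt K)$ and the misspecification part contributes $\widetilde O(\sum_h\sqrt{d_h}\,\mathcal I K)$, which dominates the $O(H\mathcal I K)$ term and gives the stated regret.

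I expect the delicate part to be the interaction between optimism and the staleness correction. One must verify that the frozen value function stays globally optimistic for every episode inside a batch, not merely at the update episode where concentration is invoked (this is immediate once optimism is shown at $k'$, since the value function is unchanged until the next update), and that converting the frozen-matrix bonus to the current-matrix bonus composes correctly with the inherent-Bellman-error slack $\sqrt k\,\mathcal I$, since both the confidence radius and the accumulated bonuses are expressed through the stale quantities. The determinant-doubling rule is precisely what keeps this conversion lossless up to constants, so that the low-switching algorithm matches ELEANOR's regret while updating only $O(\sum_h d_h\log K)$ times.
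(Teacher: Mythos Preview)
Your proposal is correct and follows essentially the same route as the paper: a determinant-potential argument for the switching cost (you count per-layer triggers and sum, while the paper bounds the product $\prod_h\det(\Sigma_h)$, but these are equivalent), and for the regret you reproduce the ELEANOR analysis together with the determinant-comparison step $\|x\|_{(\Sigma_h^{b_k})^{-1}}^2\le 2\|x\|_{(\Sigma_h^k)^{-1}}^2$ to handle the frozen covariance. One minor correction: under $\mathcal I>0$ the program is only approximately optimistic, $\bar V_1^{b_k}(s_1)\ge V_1^\star(s_1)-H\mathcal I$ (Lemma~\ref{lem:op} in the paper), not exactly optimistic as you wrote; this contributes an additional $HK\mathcal I$ which you already absorb into your $O(H\mathcal I K)$ term, so the final bound is unaffected.
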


The proof of Theorem \ref{thm:main} is sketched in Section \ref{sec:upper} with details in the Appendix, below we discuss several interesting aspects of Theorem \ref{thm:main}.

\noindent\textbf{Near-optimal switching cost.} Our algorithm achieves a switching cost that depends logarithmically on $K$, which improves the $O(K)$ switching cost of ELEANOR \citep{zanette2020learning}. We also prove the following information-theoretic limit which says that the switching cost of Algorithm \ref{alg:main} is optimal up to logarithmic factors. Since it is impossible to get sub-linear regret bound with positive inherent Bellman error, we only consider the case where $\mathcal{I}=0$.

\begin{theorem}[Lower bound for no-regret learning]\label{thm:lower}
	Assume that the inherent Bellman error $\mathcal{I}=0$ and $d_h\geq 3$ for all $h\in[H]$, for any algorithm with sub-linear regret bound, the global switching cost is at least $\Omega(\sum_{h=1}^H d_h)$.
\end{theorem}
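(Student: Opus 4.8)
The plan is to exhibit a family of instances with $\mathcal{I}=0$ on which every algorithm that attains sub-linear regret is forced to change its policy $\Omega(\sum_{h=1}^H d_h)$ times, and then to extract a single bad instance by averaging. The guiding intuition is the linear-bandit case ($H=1$): if the action set realizes the standard basis of $\mathbb{R}^{d}$ and exactly one unknown coordinate carries positive reward while the others are deterministically $0$, then before the learner plays the rewarding action it receives identical (uninformative) feedback, so it can only rule out the wrong actions it has actually tried. Consequently any no-regret algorithm must, in expectation over a uniform prior on the optimal coordinate, try $\Omega(d)$ distinct actions, and each freshly tried action forces a policy switch; this already gives the $H=1$ bound $\Omega(d_1)$.

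To make the per-layer costs add up rather than amortize (a single global switch may update all $H$ layers at once), I would chain $H$ such sub-problems with an information-gating structure. Concretely, construct a deterministic-transition MDP with, at each layer $h$, a ``live'' state and an absorbing ``dead'' state; at the live state there are $\Omega(d_h)$ actions, exactly one of which, $a_h^\star$, keeps the trajectory alive (advances to layer $h+1$'s live state), while every other action sends it to the dead state for good. Positive reward (normalized to total $1$) is collected only by the trajectory that stays alive through all $H$ layers, so failing to play the correct action at any layer costs $\Omega(1)$ regret in that episode. Because the learner receives no feedback whatsoever about layer $h+1$ until it first reaches that layer's live state --- which requires having already discovered $a_1^\star,\dots,a_h^\star$ --- the exploration is serialized: the distinct layer-$h$ actions can only be tried after $a_1^\star,\dots,a_{h-1}^\star$ are known, so the switches attributable to different layers occur in disjoint episode intervals and genuinely sum. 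Placing the feature map so that this deterministic chain is realized linearly in $\mathbb{R}^{d_h}$ (reserving two coordinates to encode the live/dead and advance structure, and using the remaining $d_h-2\ge 1$ coordinates --- which is where $d_h\ge 3$ enters --- to index the $\Omega(d_h)$ candidate actions) makes each sub-problem a bona fide linear, hence $\mathcal{I}=0$ low-inherent-Bellman-error, instance.

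With the construction in hand, the counting step is: put the uniform product prior on $(a_1^\star,\dots,a_H^\star)$. Sub-linear regret forces the learner, on all but $o(K)$ episodes, to deploy the fully correct policy, which in particular requires it to have played every $a_h^\star$. Since at layer $h$ each untried action is equally likely to be optimal, the rank of $a_h^\star$ in the order the learner first tries distinct live-state actions is uniform, so it plays $\Omega(d_h)$ distinct layer-$h$ actions in expectation; and the number of distinct layer-$h$ actions used is at most one plus the number of episodes in which the layer-$h$ action changes, each of which is a global switch. Summing the $\Omega(d_h)$ switches over the disjoint per-layer phases yields an expected global switching cost of $\Omega(\sum_h d_h)$ under the prior, whence some instance in the family attains at least this value.

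The hard part will be the two places where adaptivity could in principle circumvent the bound, and I would spend the bulk of the argument there. First, I must rule out that the learner ``pre-solves'' later layers while still stuck on an earlier one; this is exactly what the dead states (which emit no signal) are designed to prevent, and it has to be verified that no statistic available before reaching layer $h+1$ carries any information about $a_{h+1}^\star$, so that the posterior over $a_{h+1}^\star$ stays uniform and the $\Omega(d_{h+1})$ lower bound survives conditioning on the history. Second, I must guarantee that the linear realization keeps the inherent Bellman error exactly $0$ while still supporting $\Omega(d_h)$ indistinguishable-until-hit actions per layer; checking that every $\mathcal{T}_h Q_{h+1}$ with $Q_{h+1}\in\mathcal{Q}_{h+1}$ lies in $\mathcal{Q}_h$ for this feature map is the main technical obstacle, and is where $d_h\ge 3$ is used to leave room for both the structural coordinates and a nontrivial family of exploratory actions.
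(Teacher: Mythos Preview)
Your approach is correct but takes a genuinely different route from the paper's. You build a \emph{serial}, information-gated chain in which the learner must discover $a_h^\star$ before any signal about layer $h{+}1$ becomes available, and then lower-bound switches layer by layer under a uniform product prior. The paper instead uses a \emph{parallel} construction that reduces directly to a multi-armed bandit: a two-state MDP in which, at the live state $s_1$, one distinguished action $a_1$ stays at $s_1$ with zero reward while each of the remaining $d_h-2$ actions at layer $h$ jumps to an absorbing state with an unknown reward $r_{h,i}$. Every deterministic policy then corresponds to a single ``arm'' $(h,i)$---the layer at which to cash out and the action taken there---so the instance \emph{is} a MAB with $\sum_h(d_h-2)$ arms, and the $\Omega(\sum_h d_h)$ switching lower bound is inherited verbatim from the known bandit result (the paper's Lemma~B.1). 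This sidesteps both hard parts you flag: with one basis vector per $(s,a)$ pair, $\mathcal{I}=0$ is immediate (the same trick would work in your construction, so that obstacle is smaller than you suggest), and because policies \emph{are} arms there is no per-layer bookkeeping against a single switch being credited to several layers. Your chained construction, by contrast, makes the additivity over $h$ explicit and would transfer more readily to settings where a clean flattening to MAB is unavailable, at the cost of the extra accounting you correctly anticipate.
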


The proof of Theorem \ref{thm:lower} is sketched in Section \ref{sec:lower} with details in the Appendix.

\noindent\textbf{Application to linear MDP.} As discussed in Section \ref{sec:setting}, linear MDP with dimension $d$ is a special case of the low inherent Bellman error setting with $\mathcal{I}=0$, $d_1=d_2=\cdots=d_H=d$. Therefore, when applied to linear MDP, our Algorithm \ref{alg:main} will have switching cost bounded by $O(dH\log K)$ and regret bounded by $\widetilde{O}(\sqrt{d^2H^3 T})$, where $T=KH$.\footnote{When transferring Theorem \ref{thm:main} to linear MDP, we need to rescale the reward function by $H$, and therefore there will be an additional factor of $H$ in our regret bound.} Compared to current algorithms achieving low switching cost under linear MDP \citep{gao2021provably,wang2021provably}, we achieve the same switching cost and a regret bound better by a factor of $\sqrt{d}$. The improvement on regret bound results from global optimism and a smaller linear function class. More importantly, low inherent Bellman error setting is indeed a harder setting than linear MDP. According to Theorem 2 in \citet{zanette2020learning}, the regret of our Algorithm \ref{alg:main} is minimax optimal. Together with the lower bound of switching cost (Theorem \ref{thm:lower}), Theorem \ref{thm:main} is generally not improvable both in regret and global switching cost. 

\noindent\textbf{Application to misspecified linear bandits.} Taking $H=1$, an MDP with low inherent Bellman error will become a linear bandit \citep{lattimore2020bandit} with model misspecification. For simplicity, we only consider the case where there is no misspecification (\emph{i.e.} $\mathcal{I}=0$), as studied in \citet{abbasi2011improved}. Our result is summarized in the following corollary.

\begin{corollary}[Results under linear bandit]\label{col}
	Suppose $H=1$ and $\mathcal{I}=0$, then the MDP reduces to a linear bandit with dimension $d$. Our Algorithm \ref{alg:main} will reduce to the RARELY SWITCHING OFUL algorithm (Figure 3 in \citet{abbasi2011improved}) and is computationally efficient.
	The global switching cost of Algorithm \ref{alg:main} is $O(d\log K)$, while the regret can be bounded by $\widetilde{O}(d\sqrt{K})$ with high probability.
\end{corollary}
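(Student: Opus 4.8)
The plan is to treat Corollary \ref{col} as a specialization of Theorem \ref{thm:main} together with an explicit reduction of Algorithm \ref{alg:main} to RARELY SWITCHING OFUL. I would organize the argument in three pieces: (i) unpacking the $H=1$ setting to confirm it is a realizable linear bandit; (ii) showing that the per-episode optimization and the doubling schedule collapse exactly to OFUL with rare switching, which simultaneously yields computational efficiency; and (iii) reading off the two quantitative bounds from Theorem \ref{thm:main}.

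First I would verify the setting. With $H=1$ the only interaction is the fixed state $s_1$ followed by one action and its reward, and the convention $\bar{\theta}_{H+1}=0$ in line 8 of Algorithm \ref{alg:main} forces $V_2\equiv 0$. Hence the Bellman operator reduces to $\mathcal{T}_1(Q_2)(s_1,a)=r_1(s_1,a)$, so $\mathcal{I}=0$ says precisely that the expected reward $r_1(s_1,\cdot)$ lies exactly in the span of $\phi_1(s_1,\cdot)$, i.e. $r_1(s_1,a)=\phi_1(s_1,a)^\top\theta_1^\star$ for some $\theta_1^\star\in\mathcal{B}_1$. This is exactly a realizable linear bandit of dimension $d=d_1$ with arm features $\phi_1(s_1,\cdot)$.

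Second, and this is the substantive part, I would establish the algorithm reduction. Since $V_2\equiv 0$, the constraint defining $\widehat{\theta}_1$ in Definition \ref{def:optimization} becomes $\widehat{\theta}_1=(\Sigma_1^k)^{-1}\sum_{\tau=1}^{k-1}\phi_1^\tau r_1^\tau$, a fixed ridge estimate that no longer depends on any optimization variable. The optimistic planning therefore degenerates to maximizing $\max_a\phi_1(s_1,a)^\top(\widehat{\theta}_1+\bar{\xi}_1)$ over the confidence ellipsoid $\|\bar{\xi}_1\|_{\Sigma_1^k}\le\sqrt{\alpha_1^k}$, which is precisely OFUL's optimism-in-the-face-of-uncertainty action selection. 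The update trigger in line 7, $\det(\Sigma_1^k)\ge 2\det(\Sigma_1)$, is exactly the determinant-doubling rule of RARELY SWITCHING OFUL, so Algorithm \ref{alg:main} coincides with Figure 3 of \citet{abbasi2011improved}. Computational efficiency then follows because the decoupling of $\widehat{\theta}_1$ removes the max-over-future that breaks the quadratic structure in the general-$H$ case: for each arm $a$ the inner optimization has the closed form $\phi_1(s_1,a)^\top\widehat{\theta}_1+\sqrt{\alpha_1^k}\,\|\phi_1(s_1,a)\|_{(\Sigma_1^k)^{-1}}$, so the optimistic arm is found by a single pass over the action set.

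Third, I would obtain the guarantees directly from Theorem \ref{thm:main} with $H=1$, $d_1=d$, $\mathcal{I}=0$: the switching-cost bound $O(\sum_{h=1}^H d_h\log K)$ becomes $O(d\log K)$, and the regret bound $\widetilde{O}(\sum_{h=1}^H d_h\sqrt{K}+\sum_{h=1}^H\sqrt{d_h}\,\mathcal{I}K)$ becomes $\widetilde{O}(d\sqrt{K})$ since the misspecification term vanishes. I do not expect a genuine obstacle, as the corollary is essentially a consistency check. The only step requiring care is the bookkeeping in the second paragraph: confirming that the value of $\sqrt{\alpha_1^k}$ instantiated in Appendix \ref{sec:pr} matches, up to the logarithmic factors hidden in $\widetilde{O}$, the self-normalized confidence radius used by \citet{abbasi2011improved}, so that the identification of the two algorithms, and hence the inherited regret, is exact rather than merely structural.
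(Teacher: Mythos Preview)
Your proposal is correct and follows the paper's approach: the paper simply states that the corollary ``is derived by directly plugging $H=1$ and $d_1=d$ in Theorem \ref{thm:main},'' which is exactly your step (iii). Your steps (i) and (ii), spelling out the reduction to a realizable linear bandit and the collapse of Definition \ref{def:optimization} to the OFUL confidence-set maximization with the determinant-doubling trigger, are details the paper asserts without proof, so you are actually supplying more justification than the paper does.
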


The above corollary is derived by directly plugging $H=1$ and $d_1=d$ in Theorem \ref{thm:main}. Note that our Corollary \ref{col} matches the results in \citet{abbasi2011improved}, and our Algorithm \ref{alg:main} can be applied under the more general case with model misspecification. Therefore, our results can be seen as strict generalization of \citet{abbasi2011improved}.

\section{Proof sketch}\label{sec:proofsketch}

Due to the space constraint, we sketch the proof in this section while more details are deferred to the Appendix. We begin with the proof overview of Theorem \ref{thm:main}.

\subsection{Upper bounds}\label{sec:upper}

\noindent\textbf{Upper bound of switching cost.} Let $\{k_1, k_2, \cdots, k_N\}$ be the episodes where the algorithm updates the policy (N is the global switching cost), and we also define $k_0=0$.

According to the update rule (line 7 of Algorithm \ref{alg:main}), every time the policy is updated, at least one $\det(\Sigma_h^k)$ doubles, which implies that $\Pi_{h=1}^H \det(\Sigma_{h}^{k_{i+1}})\geq 2\Pi_{h=1}^H \det(\Sigma_{h}^{k_{i}})$ for all $i\in[N]$. This further implies $$\Pi_{h=1}^H \det(\Sigma_{h}^{k_{N}})\geq 2^N\Pi_{h=1}^H \det(\Sigma_{h}^{k_{0}}).$$ Since the left hand side can be upper bounded by $K^{\sum_{h=1}^H d_h}$ (details in Lemma D.2) and the right hand side is just $2^N$ (from definition), the global switching cost (\emph{i.e.} $N$) is bounded by $O(\sum_{h=1}^H d_h\log K)$.

Below we give a proof overview of the regret bound.

\noindent\textbf{Upper bound of regret.} We denote $\bar{Q}_{h}^k(\cdot,\cdot)=Q_h(\bar{\theta}_h^k)(\cdot,\cdot)=\phi_h(\cdot,\cdot)^\top\bar{\theta}_h^k$, where $\bar{\theta}_h^k$ is the solution of Definition \ref{def:optimization} at the $k$-th episode. Similarly, $\bar{V}_h^k(\cdot)=V_h(\bar{\theta}_h^k)(\cdot)$. In addition, let $b_k$ denote the last policy update before episode $k$, for all $k\in[K]$.

Based on concentration inequalities of self-normalized processes, we can show that with high probability, the ``best feasible'' approximant parameter $\theta^\star$ (Definition A.3) is a feasible solution of Definition \ref{def:optimization}. Therefore, the $\bar{V}_1^k(s_1)$ is always a nearly optimistic estimate of $V_1^\star(s_1)$ (summarized in Lemma A.5) and we only need to bound 
\begin{equation}
\begin{split}
&\text{Regret}(K) = \sum_{k=1}^{K}\left(V_{1}^\star(s_{1})-V_{1}^{\pi_{b_k}}(s_{1})\right)\\\leq&HK\mathcal{I}+\sum_{k=1}^{K}\left(\bar{V}_{1}^{b_{k}}(s_1)-V_{1}^{\pi_{b_k}}(s_{1})\right).
\end{split}
\end{equation}

Meanwhile, the pointwise Bellman error can be bounded as (this result is stated in Lemma A.6)
$$\left|\left(\bar{Q}_h^{b_k}-\mathcal{T}_h \bar{Q}_{h+1}^{b_k}\right)(s,a)\right|\leq \mathcal{I}+2\left\|\phi_h(s,a)\right\|_{(\Sigma_h^{b_k})^{-1}}\sqrt{\alpha_h^{b_k}},$$
where $\sqrt{\alpha_h^k}\leq\sqrt{K}\mathcal{I}+\widetilde{O}(\sqrt{d_h+d_{h+1}})$.

As a result, applying regret decomposition accross different layers $h\in[H]$ and bounding the martingale difference by Azuma-Hoeffding inequality (Lemma D.1), we have
\begin{equation}\label{equ:d}
\begin{split}
&\sum_{k=1}^{K}\left(\bar{V}_{1}^{b_{k}}(s_1)-V_{1}^{\pi_{b_k}}(s_{1})\right)\\\leq&\sum_{k=1}^{K}\sum_{h=1}^{H}\left(\mathcal{I}+2\left\|\phi_h(s_h^k,a_h^k)\right\|_{(\Sigma_h^{b_k})^{-1}}\sqrt{\alpha_h^{b_k}}\right)\\&+\text{Sum of bounded martingale difference}\\\leq&
\underbrace{\sum_{k=1}^{K}\sum_{h=1}^{H}2\left\|\phi_h(s_h^k,a_h^k)\right\|_{(\Sigma_h^{b_k})^{-1}}\sqrt{\alpha_h^{b_k}}}_{(a)}\\+&HK\mathcal{I}+\widetilde{O}(\sum_{h=1}^H\sqrt{d_h K}).
\end{split}
\end{equation}

Due to our update rule based on $\det(\Sigma_h^k)$, we have
\begin{equation}\label{equ:a}
\begin{split}
&(a)\leq\sum_{h=1}^H 2\sqrt{\alpha_h^K}\cdot\sqrt{K\sum_{k=1}^K \left\|\phi_h(s_h^k,a_h^k)\right\|_{(\Sigma_h^{b_k})^{-1}}^2}
\\\leq&\sum_{h=1}^H 2\sqrt{\alpha_h^K}\cdot\sqrt{2K\sum_{k=1}^K \left\|\phi_h(s_h^k,a_h^k)\right\|_{(\Sigma_h^{k})^{-1}}^2}
\\\leq&\widetilde{O}\left(\sum_{h=1}^H (\sqrt{K}\mathcal{I}+\sqrt{d_h+d_{h+1}})\cdot \sqrt{Kd_h} \right)\\\leq&
\widetilde{O}(\sum_{h=1}^H \sqrt{d_h}K\mathcal{I}+\sum_{h=1}^H d_h\sqrt{K}),
\end{split}
\end{equation}
where the second inequality holds because of Lemma D.3 and our update rule. The third inequality is from elliptical potential lemma (Lemma D.4).

Finally, the regret bound results from plugging \eqref{equ:a} into \eqref{equ:d}.

\subsection{Lower bound}\label{sec:lower} In this part, we sketch the proof of Theorem \ref{thm:lower}. 

We construct a hard MDP case with zero inherent Bellman error ($\mathcal{I}=0$), which has deterministic transition kernel. Therefore, deploying some deterministic policy will lead to a deterministic trajectory, like pulling an ``arm'' in the multi-armed bandits (MAB) setting. We further show that the number of such ``arms'' is at least $\Omega(\sum_{h=1}^H d_h)$. Together with the lower bounds of switching cost in multi-armed bandits \citep{qiao2022sample}, we can derive the $\Omega(\sum_{h=1}^H d_h)$ lower bound under the low inherent Bellman error setting. 

\section{Extension to generalized linear function approximation}\label{sec:6}

In this section, we consider low adaptive reinforcement learning with generalized linear function approximation \citep{wang2019optimism}. We show that the same ``doubling schedule'' for updating exploration policy (line 7 of Algorithm \ref{alg:main}) can be leveraged under this setting, which enables the design of provably efficient algorithms. We begin with the introduction of generalized linear function approximation.

\subsection{Problem setup}
Different from the low inherent Bellman error setting which characterizes $Q^\star$ using linear functions, we use a function class of generalized linear models (GLMs) to model $Q^\star$. We denote the dimension of feature map by $d$ and define $\mathbb{B}_d=\{x\in\mathbb{R}^d:\|x\|_2\leq 1\}$. 

\begin{definition}[GLM \citep{wang2019optimism}]
	For a known feature map $\phi:\mathcal{S}\times\mathcal{A}\rightarrow\mathbb{B}_d$ and a known link function $f:[-1,1]\rightarrow[-1,1]$, the class of generalized linear models is $\mathcal{G}=\{(s,a)\rightarrow f(\langle\phi(s,a),\theta\rangle):\theta\in\mathbb{B}_d\}$.    
\end{definition}

Similar to \citet{wang2019optimism}, we make the following standard assumption which is without loss of generality.

\begin{assumption}
	$f(\cdot)$ is either monotonically increasing or decreasing. Furthermore, there exist absolute constants $0<\kappa_1<\kappa_2<\infty$ and $M<\infty$ such that $\kappa_1\leq|f^\prime(z)|\leq \kappa_2$ and $|f^{\prime\prime}(z)|\leq M$, for all $|z|\leq 1$.
\end{assumption}

This assumption is naturally satisfied by the identical map $f(z)=z$ and also includes other non-linear maps such as the logistic map $f(z)=1/(1+e^{-z})$.

To characterize completeness under this function class, \citet{wang2019optimism} assumes the function class is closed with respect to the Bellman operator $\mathcal{T}_h$ (defined in Section \ref{sec:setup}). Similarly, we make the same optimistic closure assumption below. Note that for a fixed constant $\Gamma>0$\footnote{$\Gamma$ will be set to depend polynomially on $d$ and $\log K$.}, the enlarged function class is defined as 

\begin{align*}
\mathcal{G}_{\text{up}}=\{&(s,a)\rightarrow\min\{1,f(\langle\phi(s,a),\theta\rangle)+\gamma\|\phi(s,a)\|_A\}:\\
&\theta\in\mathbb{B}_d,\;0\leq \gamma\leq\Gamma,\;A\succcurlyeq 0,\;\|A\|_2\leq 1\}.
\end{align*}

Then the optimistic closure assumption is stated below.
\begin{assumption}
	For all $h\in[H]$ and $g\in\mathcal{G}_{\text{up}}$, we have $\mathcal{T}_h(g)\in\mathcal{G}$.
\end{assumption}

According to Proposition 1 of \citet{wang2019optimism}, this assumption strictly generalizes the standard linear MDP setting by allowing link functions with more expressivity.

\subsection{Low switching algorithm}
We present our Algorithm \ref{alg:minor} below. Intuitively speaking, the algorithmic idea is to apply doubling schedule to Algorithm 1 of \citet{wang2019optimism}. Similar to Algorithm \ref{alg:main}, we only update the exploration policy when the ``information gain'' with respect to some layer has doubled (line 7). When the policy is updated, the LSVI step calculates an estimate of $\theta^\star$ (the parameter w.r.t. the real $Q^\star$ function) iteratively from the $H$-th layer to the first layer through minimizing \eqref{equ:lsvi2}. Then the optimistic $Q$ value function is constructed by adding a bonus term $\gamma\|\phi(\cdot,\cdot)\|_{(\Sigma_h^k)^{-1}}$ to the empirical estimate $f(\phi(\cdot,\cdot)^\top\theta_h^k)$ (line 11). Finally, the greedy policy is deployed for collecting data (line 12, 18). 

\begin{algorithm*}[tbh]
	\caption{Low-switching-cost LSVI-UCB with generalized linear function approximation}\label{alg:minor}
	\begin{algorithmic}[1]
		\STATE \textbf{Input}: Number of episodes $K$, feature map $\{\phi(\cdot,\cdot)\}$, failure probability $\delta$, parameters $\kappa_1,\kappa_2,M$, universal constant $C$.
		\STATE \textbf{Initialize}: $\Sigma_h=\Sigma_h^0=I_{d}$, for all $h\in[H]$. $\gamma=C\kappa_2\kappa_1^{-1}\sqrt{1+M+\kappa_2+d^2\log(\frac{1+\kappa_2+\Gamma}{\delta})}$.
		\FOR{$k=1,2,\cdots,K$}  
		\FOR{$h=1,2,\cdots,H$}
		\STATE $\Sigma_h^k=\sum_{\tau=1}^{k-1}\phi(s_h^\tau,a_h^\tau)\phi(s_h^\tau,a_h^\tau)^\top+I_{d}.$
		\ENDFOR
		\IF{$\exists\,h\in[H]$, $\det(\Sigma_h^k)\geq 2\det(\Sigma_h)$}
		\STATE Set $Q_{H+1}^k(\cdot,\cdot)=0$.
		\FOR{$h=H,\cdots,1$}
		\STATE Solve the empirically optimal estimate of $\theta^\star$. \begin{equation}\label{equ:lsvi2}
		\theta_h^k=\arg\min_{\|\theta\|_2\leq 1}\sum_{\tau=1}^{k-1}(f(\langle\phi(s_h^\tau,a_h^\tau),\theta\rangle)-r_h^\tau-\max_{a^\prime\in\mathcal{A}}Q_{h+1}^k(s_{h+1}^\tau,a^\prime))^2.
		\end{equation}
		\STATE Construct the Q value function: $Q_h^k(\cdot,\cdot)=\min\{1,f(\phi(\cdot,\cdot)^\top \theta_h^k)+\gamma\|\phi(\cdot,\cdot)\|_{(\Sigma_h^k)^{-1}}\}$.
		\STATE Set $\pi_h^k(s)=\arg\max_{a\in\mathcal{A}}Q_h^k(s,a)$.
		\STATE Set $\Sigma_h=\Sigma_h^k$.
		\ENDFOR
		\ELSE 
		\STATE Set $\pi_h^k=\pi_h^{k-1}$ for all $h\in[H]$.
		\ENDIF
		\STATE Deploy policy $\pi_k=(\pi^k_1,\cdots,\pi^k_H)$ and get trajectory $(s_1^k,a_1^k,r_1^k,\cdots,s_{H+1}^k)$.
		\ENDFOR
	\end{algorithmic}
\end{algorithm*}

\subsection{Main results of Algorithm \ref{alg:minor}}
In this part, we state the main results about Algorithm \ref{alg:minor}. We begin with the upper bounds for regret and switching cost. Recall that we still assume $\sum_{h=1}^{H} r_h\in[0,1]$ almost surely, while $d$ represents the dimension of the feature map.

\begin{theorem}[Main results]\label{thm:minor}
	The global switching cost of Algorithm \ref{alg:minor} is bounded by $O(dH\cdot \log K)$. In addition, with probability $1-\delta$, the regret of Algorithm \ref{alg:minor} over $K$ episodes is bounded by
	$$\text{Regret}(K)\leq \widetilde{O}\left(H\sqrt{d^3K}\right).$$
\end{theorem}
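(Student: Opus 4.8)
The plan is to follow the same two-part structure used for Theorem \ref{thm:main}, since Algorithm \ref{alg:minor} inherits the doubling schedule (line 7) of Algorithm \ref{alg:main}; the only structural change is that the optimism and concentration machinery now comes from \citet{wang2019optimism} rather than from the ELEANOR-style global optimism. I would prove the switching cost bound first, as it is essentially verbatim the argument of Section \ref{sec:upper}. Letting $\{k_1,\ldots,k_N\}$ be the update episodes, the rule that some $\det(\Sigma_h^k)$ at least doubles at each update gives $\prod_{h=1}^H \det(\Sigma_h^{k_N}) \geq 2^N \prod_{h=1}^H \det(\Sigma_h^0)$. Because $\|\phi(s,a)\|_2 \leq 1$ and at most $K$ samples enter each covariance, $\det(\Sigma_h^k) \leq (1+K/d)^d$, so the product over the $H$ layers is $K^{O(dH)}$; combining the two bounds yields $N = O(dH\log K)$.

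For the regret, the first step is to establish optimism at each update episode: with high probability $Q_h^{b_k}(\cdot,\cdot) = \min\{1, f(\phi(\cdot,\cdot)^\top \theta_h^{b_k}) + \gamma\|\phi(\cdot,\cdot)\|_{(\Sigma_h^{b_k})^{-1}}\}$ upper bounds $Q_h^\star$. This is exactly the concentration guarantee of \citet{wang2019optimism} applied at episode $b_k$, and its validity rests on the self-normalized bound for the nonlinear least-squares solution $\theta_h^{b_k}$ of \eqref{equ:lsvi2}, the link-function properties ($\kappa_1 \leq |f'| \leq \kappa_2$, $|f''| \leq M$), the optimistic closure assumption, and the choice $\gamma = \widetilde{O}(d)$ (the dominant term under the square root in the definition of $\gamma$ is $d^2\log(\cdot)$). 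Since every deployed $Q$-function is built at an update episode using all past data, optimism transfers with no loss. Writing $b_k$ for the last update before episode $k$, I would then decompose $\text{Regret}(K) \leq \sum_{k=1}^K (V_1^{b_k}(s_1) - V_1^{\pi_{b_k}}(s_1))$ and telescope the pointwise Bellman error along the deployed trajectory, bounding this by $\sum_{k=1}^K \sum_{h=1}^H 2\gamma\|\phi(s_h^k,a_h^k)\|_{(\Sigma_h^{b_k})^{-1}}$ plus a martingale-difference sum controlled by Azuma--Hoeffding (Lemma D.1).

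The step specific to the low-switching design, which I expect to be the main obstacle, is converting the stale bonus (built from the covariance $\Sigma_h^{b_k}$ frozen at the last update) into the current-episode bonus so the elliptical potential lemma applies. The doubling rule guarantees $\det(\Sigma_h^k) < 2\det(\Sigma_h^{b_k})$ between updates, which through Lemma D.3 gives $\|\phi\|_{(\Sigma_h^{b_k})^{-1}}^2 \leq 2\|\phi\|_{(\Sigma_h^k)^{-1}}^2$. Applying Cauchy--Schwarz and then the elliptical potential lemma (Lemma D.4) yields, per layer, $\sum_{k=1}^K \gamma\|\phi(s_h^k,a_h^k)\|_{(\Sigma_h^{b_k})^{-1}} \leq \gamma\sqrt{2K\sum_{k=1}^K \|\phi\|_{(\Sigma_h^k)^{-1}}^2} = \widetilde{O}(\gamma\sqrt{dK})$. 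Substituting $\gamma = \widetilde{O}(d)$ gives $\widetilde{O}(\sqrt{d^3K})$ per layer, and summing over the $H$ layers produces the claimed $\widetilde{O}(H\sqrt{d^3K})$ bound.

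The genuinely delicate work will be propagating the nonlinearity of $f$ through both the optimism claim and the one-step Bellman-error bound, since the least-squares objective is no longer quadratic in $\theta$; here I would lean on the GLM self-normalized concentration and the curvature bounds on $f$ exactly as in \citet{wang2019optimism}. By contrast, the doubling-to-current-covariance conversion and the elliptical-potential summation are structurally identical to the corresponding steps in the proof of Theorem \ref{thm:main}, so the low-switching modification costs only a constant factor of $\sqrt{2}$ in the regret relative to the fully adaptive algorithm.
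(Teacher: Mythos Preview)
Your proposal is correct and follows essentially the same route as the paper. The paper packages the optimism-plus-telescoping step by invoking a regret-decomposition lemma from \citet{wang2019optimism} (their Corollary~3) with $\text{conf}_h^k(s,a)=\gamma\|\phi(s,a)\|_{(\Sigma_h^{b_k})^{-1}}$, but its content is exactly the pointwise Bellman-error telescoping plus Azuma--Hoeffding you write out; the stale-to-current covariance conversion via the determinant-ratio lemma and the subsequent Cauchy--Schwarz / elliptical-potential summation are identical to yours.
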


The proof of Theorem \ref{thm:minor} is deferred to Appendix \ref{sec:C} due to space limit, below we discuss several interesting aspects of Theorem \ref{thm:minor}.

\noindent\textbf{Near-optimal switching cost.} Our algorithm achieves a switching cost that depends logarithmically on $K$, which improves the $O(K)$ switching cost of Algorithm 1 in \citet{wang2019optimism}. We also prove the following information-theoretic limit which says that the switching cost of Algorithm \ref{alg:minor} is optimal up to logarithmic factors. 

\begin{theorem}[Lower bound for no-regret learning]\label{thm:lower2}
	For any algorithm with sub-linear regret bound, the global switching cost is at least $\Omega(dH)$.
\end{theorem}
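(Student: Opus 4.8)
The plan is to exhibit a single hard instance inside the GLM class on which every no-regret algorithm must switch $\Omega(dH)$ times; since a lower bound only needs one bad instance, it suffices to work with the identity link $f(z)=z$, for which the GLM class reduces to $Q$-functions that are linear in $\phi$. First I would record that a linear MDP with feature dimension $d$ meets both GLM assumptions: the identity map has $f'\equiv 1$, $f''\equiv 0$ (so $\kappa_1=\kappa_2=1$, $M=0$), and in a linear MDP the Bellman backup of any bounded function --- including the bonus-inflated functions in $\mathcal{G}_{\mathrm{up}}$ --- is again linear in $\phi$ and hence lies in $\mathcal{G}$, so optimistic closure holds. Consequently any switching-cost lower bound for linear MDPs transfers verbatim to the GLM setting, and the task becomes constructing a suitable hard linear MDP.

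The hard instance is the construction behind Theorem \ref{thm:lower}, specialized so that it is a genuine linear MDP with uniform per-layer dimension $d_1=\dots=d_H=d$ (hence $\sum_h d_h = dH$) and zero inherent Bellman error. Concretely, I would build an MDP with deterministic transitions in which each layer $h$ hosts a copy of a $d$-dimensional hard linear bandit: a set of $\Theta(d)$ actions whose features span the relevant directions, with one planted high-reward direction that can only be identified by actually playing an action aligned with it. Because transitions are deterministic, each deterministic policy traces out a single deterministic trajectory, so the learning problem is equivalent to a multi-armed-bandit problem whose ``arms'' are these trajectories, and the construction guarantees at least $\Omega(dH)$ essential arms that must be tried.

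With the instance fixed, I would invoke the multi-armed-bandit switching lower bound of \citet{qiao2022sample}: if an algorithm plays fewer than $\Omega(dH)$ distinct arms, it leaves some planted direction unexplored at some layer, and an adversary placing the optimal action there forces linear regret. Since the global policy contains all $H$ layer policies, the number of distinct global policies deployed is at least the number of distinct explored trajectories, and the global switching cost is at least this number minus one; exploring $\Omega(dH)$ essential arms therefore costs $\Omega(dH)$ global switches, so any sublinear-regret algorithm satisfies $N_{\mathrm{switch}}=\Omega(dH)$.

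The main obstacle --- and the reason the bound is $\Omega(dH)$ rather than merely $\Omega(d)$ --- is ensuring the $H$ per-layer sub-problems are \emph{de-synchronized}, so their exploration costs add rather than being absorbed into shared global switches. A single global-policy update may alter all $H$ layer policies at once, so an agent able to learn all layers simultaneously would need only $\Omega(d)$ switches. The construction must therefore chain the layers so that the informative states of layer $h$ are reachable only after the optimal actions of layers $1,\dots,h-1$ have been identified, forcing the $\Omega(d)$ explorations at distinct layers into separate sequential batches of switches. Verifying this sequential reachability while keeping the instance exactly linear with zero inherent Bellman error (so that it legitimately lies in the GLM class) is the delicate part; the remaining steps follow the template of Theorem \ref{thm:lower}.
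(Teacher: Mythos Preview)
Your reduction step --- observing that a linear MDP with the identity link satisfies both GLM assumptions and hence that any linear-MDP switching lower bound transfers to the GLM class --- is exactly the paper's argument. The paper's proof is in fact a single sentence: it invokes this inclusion and cites the $\Omega(dH)$ linear-MDP lower bound of \citet{gao2021provably} directly, without reconstructing a hard instance.

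Where you diverge is in attempting to rebuild the hard instance, and in doing so you manufacture an obstacle that the construction you cite already dissolves. Your ``de-synchronization'' worry --- that one global-policy switch can update all $H$ layer policies at once, potentially collapsing the bound to $\Omega(d)$ --- is legitimate for a naive ``play an independent $d$-armed bandit at every layer along the trajectory'' design, since one trajectory would then yield $H$ informative samples. But the construction behind Theorem~\ref{thm:lower} does not have this structure: the agent stays at $s_1$ by taking $a_1$ until, at a single layer $h$, it takes some $a_i$ with $i\geq 2$, collects the unknown reward $r_{h,i}$, and immediately absorbs into $s_2$ for the rest of the episode. Each deterministic policy therefore reveals exactly one $(h,i)$ pair, so the $\sum_h (d_h-2)=H(d-2)=\Omega(dH)$ arms are automatically de-synchronized by the absorbing dynamics. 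No sequential chaining in which ``layer $h$ becomes informative only after layers $1,\dots,h-1$ are solved'' is needed, and the delicate task you flag --- keeping such a chained instance an exact linear MDP --- simply does not arise. Specializing Theorem~\ref{thm:lower} to $d_h=d$ (the features there are canonical basis vectors, so it is already a genuine linear MDP) and invoking Lemma~\ref{lem:mab} finishes the job; the paper just short-circuits even this by citing \citet{gao2021provably}.
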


Theorem \ref{thm:lower2} is adapted from the lower bound for global switching cost under linear MDP \citep{gao2021provably}, and we leave the proof to Appendix \ref{sec:C}.

\noindent\textbf{Generalization over previous results.} The closest result to our Algorithm \ref{alg:minor} is the fully adaptive Algorithm 1 of \citet{wang2019optimism}, which achieves the same $\widetilde{O}\left(H\sqrt{d^3K}\right)$ regret bound. In comparison, our Algorithm \ref{alg:minor} favors near optimal global switching cost at the same time, which saves computation and accelerates the learning process.

When applying our Algorithm \ref{alg:minor} to the linear MDP case, our Theorem \ref{thm:minor} will imply a regret bound of $\widetilde{O}(\sqrt{d^3H^3T})$\footnote{The identical link function corresponds to $\kappa_1=\kappa_2=1$ and $M=0$. In addition, due to rescaling of reward functions, there will be an additional $H$ factor in the regret bound of Theorem \ref{thm:minor}.} ($T=KH$) and a global switching cost of $O(dH\log K)$, which recovers the results in \citet{gao2021provably,wang2021provably}. Therefore, our result can be considered as generalization of these two results since GLMs allow more general function classes. 

\section{Conclusion and future work}
This paper studied the well motivated problem of online reinforcement learning with low switching cost. Under linear Bellman-complete MDP with low inherent Bellman error, we designed an algorithm (Algorithm \ref{alg:main}) with near optimal regret bound of $\widetilde{O}\left(\sum_{h=1}^H d_h \sqrt{K}+\sum_{h=1}^H \sqrt{d_h}\mathcal{I}K\right)$ and global switching cost bound of $O(\sum_{h=1}^H d_h\cdot \log K)$. In addition, we prove a (nearly) matching global switching cost lower bound $\Omega(\sum_{h=1}^H d_h)$ for any algorithm with sub-linear regret. At the same time, we leverage the same ``doubling trick'' under the generalized linear function approximation setting, and designed a sample-efficient algorithm (Algorithm \ref{alg:minor}) with near optimal switching cost.

Although being more general than linear MDP, the two settings we consider are not the most general ones. The low Bellman eluder dimension setting \citep{jin2021bellman} and MDP with differentiable function approximation \citep{zhang2022off} can be considered as generalization of the two settings in this paper, respectively. Therefore, our results can be considered as a middle step towards low switching reinforcement learning under more general MDP settings. For further extension, it will be interesting to find out whether low switching cost RL is possible under more general MDP settings (\emph{e.g.}, low Bellman eluder dimension \citep{jin2021bellman}, differentiable function class \citep{zhang2022off,yin2023offline}), and we leave these as future work.

\section*{Acknowledgments}
The research is partially supported by NSF Award \#2007117. 

\bibliographystyle{plainnat}
\bibliography{uai2023-template}

\newpage
\appendix

\section{Proof of Theorem \ref{thm:main}}
In this section, we prove our main theorem. We first restate Theorem \ref{thm:main} below, and then prove the bounds for switching cost and regret in Section \ref{sec:ps} and Section \ref{sec:pr}, respectively. 

\begin{theorem}[Restate Theorem \ref{thm:main}]\label{thm:restatemain}
	The global switching cost of Algorithm \ref{alg:main} is bounded by $O(\sum_{h=1}^H d_h\cdot \log K)$. In addition, with probability $1-\delta$, the regret of Algorithm \ref{alg:main} over $K$ episodes is bounded by
	$$\text{Regret}(K)\leq \widetilde{O}\left(\sum_{h=1}^H d_h \sqrt{K}+\sum_{h=1}^H \sqrt{d_h}\mathcal{I}K\right).$$  
\end{theorem}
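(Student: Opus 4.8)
## Proof Proposal for Theorem \ref{thm:restatemain}

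\textbf{Overview.} The plan is to prove the two claims separately. The switching cost bound follows from a clean potential argument on the product of determinants, while the regret bound requires establishing that the optimization problem in Definition \ref{def:optimization} is feasible with high probability (so that global optimism holds), controlling the per-layer Bellman error, and then summing the elliptical-potential contributions across episodes. The key difficulty introduced by the low-switching design is that the covariance matrix used to construct the policy at episode $k$ is the \emph{stale} one $\Sigma_h^{b_k}$ (from the last update $b_k$), not the current $\Sigma_h^k$; I must show this staleness costs only a constant factor.

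\textbf{Switching cost.} First I would let $\{k_1,\dots,k_N\}$ be the update episodes and set $k_0=0$. By the update rule (line 7), each policy update is triggered by $\det(\Sigma_h^{k})\geq 2\det(\Sigma_h)$ for at least one layer $h$, and since determinants are nondecreasing in the number of accumulated rank-one terms, this yields $\prod_{h=1}^H \det(\Sigma_h^{k_{i+1}}) \geq 2\prod_{h=1}^H \det(\Sigma_h^{k_i})$. Iterating gives $\prod_{h=1}^H \det(\Sigma_h^{k_N}) \geq 2^N \prod_{h=1}^H \det(\Sigma_h^{k_0})$. The denominator is $1$ since $\Sigma_h^0=\lambda I_{d_h}$ with $\lambda=1$, and the numerator is bounded above using $\|\phi_h\|_2\leq 1$ and $\lambda=1$ by $\det(\Sigma_h^{k_N})\leq (1+K/d_h)^{d_h}\leq (1+K)^{d_h}$, so $2^N \leq \prod_h (1+K)^{d_h} = (1+K)^{\sum_h d_h}$. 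Taking logarithms yields $N = O(\sum_{h=1}^H d_h \log K)$.

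\textbf{Regret.} The plan is to run the argument already outlined in Section \ref{sec:upper}. The first step is to invoke the high-probability feasibility of the best approximant parameter $\theta^\star$ in Definition \ref{def:optimization}, which rests on a self-normalized concentration bound calibrated to the choice $\sqrt{\alpha_h^k}=\widetilde{O}(\sqrt{d_h+d_{h+1}})+\sqrt{k}\mathcal{I}$; this furnishes near-optimism, giving $V_1^\star(s_1) \leq \bar V_1^{b_k}(s_1) + H\mathcal{I}$ and reducing the regret to bounding $\sum_k (\bar V_1^{b_k}(s_1) - V_1^{\pi_{b_k}}(s_1))$. The second step bounds the pointwise Bellman error of the estimated $Q$-function by $\mathcal{I} + 2\|\phi_h(s,a)\|_{(\Sigma_h^{b_k})^{-1}}\sqrt{\alpha_h^{b_k}}$, and a standard layer-by-layer value-difference decomposition together with Azuma-Hoeffding on the resulting martingale contributes the terms $HK\mathcal{I} + \widetilde{O}(\sum_h \sqrt{d_h K})$. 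The third step handles the leading bonus sum $(a)$: apply Cauchy-Schwarz, then replace $\|\phi_h\|_{(\Sigma_h^{b_k})^{-1}}^2$ by $\|\phi_h\|_{(\Sigma_h^{k})^{-1}}^2$ at the cost of a factor of $2$ (this is where the doubling rule is essential, and I expect to cite the determinant-ratio lemma that guarantees $\det(\Sigma_h^k)\leq 2\det(\Sigma_h^{b_k})$ between updates), and finally apply the elliptical potential lemma to get $\sum_{k}\|\phi_h\|_{(\Sigma_h^k)^{-1}}^2 = \widetilde{O}(d_h)$. Combining gives $(a) = \widetilde{O}(\sum_h \sqrt{d_h}K\mathcal{I} + \sum_h d_h\sqrt{K})$, and adding the pieces yields the claimed bound.

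\textbf{Main obstacle.} The hardest part is the rigorous control of staleness, namely showing the stale covariance $\Sigma_h^{b_k}$ is spectrally comparable to $\Sigma_h^k$ so that optimism, the Bellman-error bound, and the elliptical-potential estimate all transfer from the update episode to every intervening episode. This is precisely what the doubling schedule buys: since no layer's determinant more than doubles between consecutive updates, the induced multiplicative distortion in the relevant quadratic forms is bounded by an absolute constant, and I must make this quantitative via the determinant-comparison lemma referenced as Lemma D.3 before the elliptical potential lemma can be applied to the stale norms.
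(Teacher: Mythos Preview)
Your proposal is correct and follows essentially the same approach as the paper's own proof: the switching-cost argument via the doubling potential $\prod_h \det(\Sigma_h^k)$, and the regret analysis via near-optimism, the layerwise Bellman-error bound, Azuma--Hoeffding for the martingale, Cauchy--Schwarz, the determinant-ratio lemma to absorb the staleness factor of $2$, and the elliptical potential lemma. Your identification of the staleness control as the crux, and of Lemma~D.3 as the device that converts $\det(\Sigma_h^k)\le 2\det(\Sigma_h^{b_k})$ into $\|\phi_h\|_{(\Sigma_h^{b_k})^{-1}}^2\le 2\|\phi_h\|_{(\Sigma_h^{k})^{-1}}^2$, matches the paper exactly.
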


\subsection{Proof of switching cost bound}\label{sec:ps}
\begin{proof}[Proof of switching cost bound]
	Let $\{k_1, k_2, \cdots, k_N\}$ be the episodes where the algorithm updates the policy, and we also define $k_0=0$. 
	
	According to the update rule (line 7 of Algorithm \ref{alg:main}), for all $i\in[N]$, there exists some $h_i\in[H]$ such that 
	$$\det(\Sigma_{h_i}^{k_{i+1}})\geq 2\det(\Sigma_{h_i}^{k_i}).$$
	In addition, for all $h,i\in[H]\times[N]$, we have 
	$$\det(\Sigma_{h}^{k_{i+1}})\geq \det(\Sigma_{h}^{k_i}).$$
	Combining these two results, we have for all $i\in[N]$,
	\begin{equation}\label{equ:double}
	\Pi_{h=1}^H \det(\Sigma_{h}^{k_{i+1}})\geq 2\Pi_{h=1}^H \det(\Sigma_{h}^{k_{i}}).
	\end{equation}
	Therefore, it holds that
	\begin{equation}\label{equ:switch}
	K^{\sum_{h=1}^H d_h}\geq\Pi_{h=1}^H \det(\Sigma_{h}^{k_{N}})\geq 2^N\Pi_{h=1}^H \det(\Sigma_{h}^{k_{0}})=2^N,
	\end{equation}
	where the first inequality is because of Lemma \ref{lem:wang} and our choice that $\lambda=1$. The second inequality is due to recursive application of \eqref{equ:double}. The last equation holds since we have $\Sigma_h^{k_0}=I_{d_h}$ for all $h$.
	
	Solving \eqref{equ:switch}, we have $N\leq \frac{\sum_{h=1}^H d_h \log K}{\log 2}=O(\sum_{h=1}^H d_h \log K)$, and therefore the proof is complete.
\end{proof}

\subsection{Proof of regret bound}\label{sec:pr}
We first state some technical lemmas from \citet{zanette2020learning}. We begin with the following bound on failure probability.

\begin{lemma}[Lemma 2 of \citet{zanette2020learning}]\label{lem:zanette2}
	With probability at least $1-\delta/2$, for all $k\in [K]$, $h\in[H]$, $V_{h+1}\in\mathcal{V}_{h+1}$,
	\begin{equation}
	\left\|\sum_{i=1}^{k-1}\phi_h^i\left(r_h^i-r_h(s_h^i,a_h^i)+V_{h+1}(s_{h+1}^i)-\E_{s^\prime\sim P_h(\cdot|s_h^i,a_h^i)} V_{h+1}(s^\prime)\right)\right\|_{(\Sigma_h^k)^{-1}}\leq \sqrt{\beta_h^k},
	\end{equation}
	where $\sqrt{\beta_h^k}:=\sqrt{d_h \log(1+k/d_h)+2d_{h+1}\log(1+4\sqrt{k d_h})+\log(\frac{2KH}{\delta})}+1=\widetilde{O}(\sqrt{d_h+d_{h+1}})$.
\end{lemma}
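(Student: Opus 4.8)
The plan is to establish this uniform self-normalized concentration bound in three stages: a pointwise bound for each fixed value function, a covering argument to make it uniform over $\mathcal{V}_{h+1}$, and a final union bound over $h$ and $k$.

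First I would fix $h\in[H]$, $k\in[K]$, and a single $V\in\mathcal{V}_{h+1}$, and define the noise $\eta_h^i(V) := r_h^i-r_h(s_h^i,a_h^i)+V(s_{h+1}^i)-\E_{s'\sim P_h(\cdot|s_h^i,a_h^i)}V(s')$. Adapting to the natural filtration in which $(s_h^i,a_h^i)$ is revealed before $(r_h^i,s_{h+1}^i)$, each $\eta_h^i(V)$ is a zero-mean martingale difference; since $|r_h^i|\le 1$ and $|V|\le 1$ (from $\theta\in\mathcal{B}_{h+1}$), it is bounded and hence conditionally sub-Gaussian with an absolute parameter, while $\phi_h^i$ is predictable. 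I would then invoke the self-normalized tail inequality for vector-valued martingales (Theorem~1 of Abbasi-Yadkori et al., 2011): with probability $1-\delta'$,
$$\left\|\sum_{i=1}^{k-1}\phi_h^i \eta_h^i(V)\right\|_{(\Sigma_h^k)^{-1}}^2 \le 2\log\!\left(\frac{\det(\Sigma_h^k)^{1/2}}{\delta'\,\det(\lambda I_{d_h})^{1/2}}\right),$$
up to the sub-Gaussian constant. The determinant ratio is controlled by the trace estimate $\det(\Sigma_h^k)\le(\lambda+k/d_h)^{d_h}$, obtained from $\mathrm{tr}(\Sigma_h^k)\le \lambda d_h+k$ and AM--GM over the eigenvalues, which produces the $d_h\log(1+k/d_h)$ term.

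Next I would make the bound uniform over $V\in\mathcal{V}_{h+1}$. Every such $V$ has the form $V_\theta(s)=\max_a\phi_{h+1}(s,a)^\top\theta$ with $\theta\in\mathcal{B}_{h+1}\subseteq\{\|\theta\|_2\le\sqrt{d_{h+1}}\}$, and $|V_\theta(s)-V_{\theta'}(s)|\le\|\theta-\theta'\|_2$ pointwise (using $\|\phi_{h+1}\|_2\le1$ and $1$-Lipschitzness of $\max$), so the class is parametrized by a Euclidean ball of radius $\sqrt{d_{h+1}}$ in $\mathbb{R}^{d_{h+1}}$. I would take an $\epsilon$-net $\mathcal{N}_\epsilon$ of this ball, whose log-cardinality satisfies $\log|\mathcal{N}_\epsilon|\le d_{h+1}\log(1+2\sqrt{d_{h+1}}/\epsilon)$, apply the pointwise bound with $\delta'=\delta/(2KH|\mathcal{N}_\epsilon|)$ at each net point, and union bound. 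For an arbitrary $V_\theta$, choosing its nearest net point $V_{\theta'}$ leaves a residual $\|\sum_i\phi_h^i(\eta_h^i(V_\theta)-\eta_h^i(V_{\theta'}))\|_{(\Sigma_h^k)^{-1}}\le \tfrac{2\epsilon k}{\sqrt{\lambda}}$, since each noise difference is bounded by $2\epsilon$ and $(\Sigma_h^k)^{-1}\preceq\lambda^{-1}I$. Picking $\epsilon$ polynomially small in $k$ and $d_h$ (e.g. $\epsilon\asymp 1/(k\sqrt{d_h})$) forces this discretization error below $1$ — exactly the additive $+1$ in $\sqrt{\beta_h^k}$ — while the covering log-term becomes $2d_{h+1}\log(1+4\sqrt{kd_h})$.

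Finally I would union bound the whole construction over $h\in[H]$ and $k\in[K]$, contributing the $\log(2KH/\delta)$ factor and assembling the stated $\sqrt{\beta_h^k}$. The main obstacle is the covering step: one must verify that the discretization error is absorbable into a constant additive slack without inflating the covering number beyond the $d_{h+1}\log(\cdot)$ order, which requires $\epsilon$ to be small but only polynomially so, and careful use of $(\Sigma_h^k)^{-1}\preceq\lambda^{-1}I$ so that the raw $\ell_2$ discretization bound survives passage to the weighted norm. Everything else is a direct application of the self-normalized martingale inequality together with elementary determinant estimates.
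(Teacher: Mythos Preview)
Your proposal is correct and follows exactly the standard route one would expect: a self-normalized martingale bound for a fixed $V$, a covering argument over the linearly parametrized class $\mathcal{V}_{h+1}$, and a union bound over $(h,k)$. This is indeed the argument behind the cited result. Note, however, that the present paper does not give its own proof of this lemma at all; it is quoted verbatim as Lemma~2 of \citet{zanette2020learning} and used as a black box, so there is no in-paper proof to compare your proposal against.

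One minor remark on constants: with your crude discretization estimate $\|\sum_i\phi_h^i(\eta_h^i(V_\theta)-\eta_h^i(V_{\theta'}))\|_{(\Sigma_h^k)^{-1}}\le 2\epsilon k/\sqrt{\lambda}$, you would need $\epsilon\asymp 1/k$ rather than $1/\sqrt{k}$, which inflates the covering term to $d_{h+1}\log(1+O(k))$ instead of the stated $2d_{h+1}\log(1+4\sqrt{kd_h})$. The sharper bound (and the precise form in $\sqrt{\beta_h^k}$) comes from using $\Phi(\lambda I+\Phi^\top\Phi)^{-1}\Phi^\top\preccurlyeq I$ to get $\|\Phi^\top c\|_{(\Sigma_h^k)^{-1}}\le\|c\|_2\le 2\epsilon\sqrt{k}$, after which $\epsilon\asymp 1/\sqrt{k}$ suffices. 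Either way the final order $\widetilde{O}(\sqrt{d_h+d_{h+1}})$ is unaffected, so your argument delivers the lemma as stated up to the $\widetilde{O}$.
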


Next, we define the ``best'' feasible parameters $\theta^\star$ that well approximate the $Q^\star$ values, and such parameters are going to be a feasible solution for the optimization problem (Definition 3.1). Then we state the accuracy bound of $\theta^\star$.

\begin{definition}[Best feasible approximant, Definition 4 of \citet{zanette2020learning}] 
	We recursively define the best approximant parameter $\theta^\star_h$ for $h\in[H]$ as:
	\begin{equation}
	\theta_h^\star=\arg\min_{\theta\in\mathcal{B}_h}\sup_{(s,a)}\left|\phi_h(s,a)^\top\theta-(\mathcal{T}_h Q_{h+1}(\theta^\star_{h+1}))(s,a)\right|
	\end{equation}
	with ties broken arbitrarily and $\theta^\star_{H+1}=0$.
\end{definition}

\begin{lemma}[Accuracy Bound of $\theta^\star$, Lemma 6 of \citet{zanette2020learning}]
	It holds that for all $h\in[H]$:
	\begin{equation}
	\sup_{(s,a)}|Q^\star_h(s,a)-\phi_h(s,a)^\top \theta_h^\star|\leq (H-h+1)\mathcal{I}.
	\end{equation}
\end{lemma}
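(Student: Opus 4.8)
The final statement to prove is the Accuracy Bound of $\theta^\star$ (Lemma 6 of Zanette et al.): for all $h\in[H]$,
$$\sup_{(s,a)}|Q^\star_h(s,a)-\phi_h(s,a)^\top\theta_h^\star|\leq(H-h+1)\mathcal{I}.$$

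First I would set up the induction. The base case sits just above $h=H$, where we use $\theta^\star_{H+1}=0$ and $Q^\star_{H+1}\equiv 0$, so the error at layer $H+1$ is trivially $0$. For the inductive step, I would assume the bound holds at layer $h+1$, namely $\sup_{(s,a)}|Q^\star_{h+1}(s,a)-\phi_{h+1}(s,a)^\top\theta^\star_{h+1}|\leq(H-h)\mathcal{I}$, and deduce it at layer $h$. The natural decomposition is a triangle inequality that inserts the intermediate quantity $(\mathcal{T}_h Q_{h+1}(\theta^\star_{h+1}))(s,a)$:
\begin{equation*}
|Q^\star_h(s,a)-\phi_h(s,a)^\top\theta^\star_h|\leq|Q^\star_h(s,a)-(\mathcal{T}_h Q_{h+1}(\theta^\star_{h+1}))(s,a)|+|(\mathcal{T}_h Q_{h+1}(\theta^\star_{h+1}))(s,a)-\phi_h(s,a)^\top\theta^\star_h|.
\end{equation*}

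For the \emph{second} term, I would invoke the definition of $\theta^\star_h$ as the minimizer of the worst-case projection error of $\mathcal{T}_h Q_{h+1}(\theta^\star_{h+1})$ onto $\mathcal{Q}_h$, together with the inherent Bellman error assumption: since $\theta^\star_{h+1}\in\mathcal{B}_{h+1}$, the defining $\sup_{\theta_{h+1}}\inf_{\theta_h}\sup_{s,a}$ quantity bounds this projection error by $\mathcal{I}$, so this term contributes at most $\mathcal{I}$. For the \emph{first} term, I would use the Bellman optimality equation $Q^\star_h=\mathcal{T}_h Q^\star_{h+1}=r_h+\E_{s'}\max_{a'}Q^\star_{h+1}$ and compare it against $\mathcal{T}_h Q_{h+1}(\theta^\star_{h+1})=r_h+\E_{s'}\max_{a'}\phi_{h+1}(\cdot,a')^\top\theta^\star_{h+1}$. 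The rewards cancel, leaving a difference of two expected maxima, which I would bound using $|\max_{a'}f(a')-\max_{a'}g(a')|\leq\max_{a'}|f(a')-g(a')|$ and then pull the supremum through the expectation; the inductive hypothesis at layer $h+1$ controls this by $(H-h)\mathcal{I}$. Summing the two contributions gives $\mathcal{I}+(H-h)\mathcal{I}=(H-h+1)\mathcal{I}$, completing the step.

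\textbf{The main obstacle} I anticipate is purely a matter of careful bookkeeping rather than deep difficulty: one must verify that $\theta^\star_{h+1}\in\mathcal{B}_{h+1}$ so that it is an admissible choice of $\theta_{h+1}$ in the inherent-Bellman-error supremum (this follows from the definition of $\theta^\star_{h+1}$ as an $\arg\min$ over $\mathcal{B}_{h+1}$), and that the max-stability inequality is applied to $V$-functions evaluated at the \emph{same} next-state before taking expectation over $P_h$. There is no concentration or probabilistic argument here — the bound is deterministic and structural, driven entirely by the definition of inherent Bellman error and the recursive construction of $\theta^\star$.
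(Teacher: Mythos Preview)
Your backward-induction argument is correct and is precisely the standard proof of this statement. Note that the present paper does not supply its own proof of this lemma; it is stated verbatim as Lemma~6 of \citet{zanette2020learning} and used as a black box, so your proposal is in fact more detailed than what the paper provides.
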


For notational simplicity, for $\bar{\theta}_h$ which is the solution of Definition 3.1, we denote $\bar{Q}_{h}(\cdot,\cdot)=Q_h(\bar{\theta}_h)(\cdot,\cdot)=\phi_h(\cdot,\cdot)^\top\bar{\theta}_h$. Besides, $\bar{Q}_h^k$ represents $Q_h(\bar{\theta}_h^k)$ where $\bar{\theta}_h^k$ is the solution at the $k$-th episode. Similarly, $\bar{V}_h(\cdot)=V_h(\bar{\theta}_h)(\cdot)$ and $\bar{V}_h^k(\cdot)=V_h(\bar{\theta}_h^k)(\cdot)$. In addition, let $b_k$ denote the last policy update before episode $k$, for all $k\in[K]$.

\begin{lemma}[Optimism, Lemma 7 of \citet{zanette2020learning}]\label{lem:op}
	Under the high probability case in Lemma \ref{lem:zanette2}, if we choose $\sqrt{\alpha_h^k}=\sqrt{\beta_h^k}+\sqrt{k}\mathcal{I}+\sqrt{d_h}=\sqrt{k}\mathcal{I}+\widetilde{O}(\sqrt{d_h+d_{h+1}})$, then $\bar{\theta}_h=\theta_h^\star$, for all $h\in[H]$ is a feasible solution of the optimization problem (Definition 3.1). Therefore, for all $k\in[K]$, the optimistic value function satisfies
	\begin{equation}
	\bar{V}_1^{b_k}(s_1)\geq V_1^\star(s_1)-H\mathcal{I}.
	\end{equation}
\end{lemma}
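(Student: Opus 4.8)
The plan is to establish the two assertions in turn: first that $\{\bar\theta_h=\theta_h^\star\}_{h\in[H]}$ is a feasible point of the program in Definition \ref{def:optimization} at every episode where the policy is updated (in particular at $b_k$), and then to read off the optimism bound from feasibility together with the accuracy bound on $\theta^\star$ (Lemma 6 of \citet{zanette2020learning}). The feasibility argument I would run by downward induction on $h$, starting from $h=H+1$, where $\theta_{H+1}^\star=0$ matches the initialization $\bar\theta_{H+1}^k=0$, and descending to $h=1$.

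Fix a layer $h$ and substitute $\bar\theta_{h+1}=\theta_{h+1}^\star$ into the LSVI constraint to define the candidate $\widehat\theta_h$. Since $\theta_h^\star\in\mathcal{B}_h$ by definition and we may take $\bar\xi_h=\theta_h^\star-\widehat\theta_h$, the only nontrivial requirement is $\norm{\theta_h^\star-\widehat\theta_h}_{\Sigma_h^k}\le\sqrt{\alpha_h^k}$. Writing $V=V_{h+1}(\theta_{h+1}^\star)$ and using $\Sigma_h^k\theta_h^\star=\sum_\tau\phi_h^\tau(\phi_h^\tau)^\top\theta_h^\star+\lambda\theta_h^\star$, I would expand
\[ \Sigma_h^k(\widehat\theta_h-\theta_h^\star)=\sum_\tau\phi_h^\tau\Big(r_h^\tau+V(s_{h+1}^\tau)-\phi_h(s_h^\tau,a_h^\tau)^\top\theta_h^\star\Big)-\lambda\theta_h^\star, \]
and split each summand into a mean-zero noise part $r_h^\tau-r_h(s_h^\tau,a_h^\tau)+V(s_{h+1}^\tau)-\E_{s'\sim P_h}V(s')$ and a bias part $\Delta_h^\tau:=(\mathcal{T}_h Q_{h+1}(\theta_{h+1}^\star))(s_h^\tau,a_h^\tau)-\phi_h(s_h^\tau,a_h^\tau)^\top\theta_h^\star$, with $|\Delta_h^\tau|\le\mathcal{I}$ by the definition of inherent Bellman error. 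Passing to the dual norm via $\norm{\widehat\theta_h-\theta_h^\star}_{\Sigma_h^k}=\norm{\Sigma_h^k(\widehat\theta_h-\theta_h^\star)}_{(\Sigma_h^k)^{-1}}$ and the triangle inequality then leaves three terms to control.

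The three terms are handled as follows. The noise term is $\norm{\sum_\tau\phi_h^\tau(r_h^\tau-r_h+V(s_{h+1}^\tau)-\E_{P_h}V)}_{(\Sigma_h^k)^{-1}}\le\sqrt{\beta_h^k}$ by the self-normalized concentration of Lemma \ref{lem:zanette2}, which applies since $V\in\mathcal{V}_{h+1}$ and the bound there is uniform over $\mathcal{V}_{h+1}$. The bias term I would bound by noting $\Phi^\top\Phi=\Sigma_h^k-\lambda I\preceq\Sigma_h^k$ (where $\Phi$ has rows $(\phi_h^\tau)^\top$), which gives $\Phi(\Sigma_h^k)^{-1}\Phi^\top\preceq I$ and hence $\norm{\sum_\tau\phi_h^\tau\Delta_h^\tau}_{(\Sigma_h^k)^{-1}}^2\le\sum_\tau(\Delta_h^\tau)^2\le k\mathcal{I}^2$, i.e.\ a bound of $\sqrt{k}\mathcal{I}$. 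The regularization term is $\lambda\norm{\theta_h^\star}_{(\Sigma_h^k)^{-1}}\le\sqrt{\lambda}\,\norm{\theta_h^\star}_2\le\sqrt{d_h}$, using $\Sigma_h^k\succeq\lambda I$, $\lambda=1$, and the normalization $\norm{\theta_h^\star}_2\le\sqrt{d_h}$. Summing these matches the choice $\sqrt{\alpha_h^k}=\sqrt{\beta_h^k}+\sqrt{k}\mathcal{I}+\sqrt{d_h}$ exactly, which closes the induction and proves feasibility of $\{\theta_h^\star\}$.

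For the optimism conclusion, since $\{\theta_h^\star\}$ is feasible at episode $b_k$ and the program maximizes $\max_a\phi_1(s_1,a)^\top\bar\theta_1=\bar V_1(s_1)$, the optimal value satisfies $\bar V_1^{b_k}(s_1)\ge\max_a\phi_1(s_1,a)^\top\theta_1^\star=V_1(\theta_1^\star)(s_1)$. The accuracy bound gives $\phi_1(s_1,a)^\top\theta_1^\star\ge Q_1^\star(s_1,a)-H\mathcal{I}$ for every $a$; taking the maximum over $a$ yields $V_1(\theta_1^\star)(s_1)\ge V_1^\star(s_1)-H\mathcal{I}$, and the stated bound follows. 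I expect the main obstacle to be the bias term: extracting the clean $\sqrt{k}\mathcal{I}$ from the operator inequality $\Phi(\Sigma_h^k)^{-1}\Phi^\top\preceq I$ rather than a looser elliptical-potential estimate, together with confirming that Lemma \ref{lem:zanette2} is uniform enough to cover $V_{h+1}(\theta_{h+1}^\star)$. The downward induction itself is then routine bookkeeping.
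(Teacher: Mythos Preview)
Your proposal is correct and is essentially the standard argument from \citet{zanette2020learning}; the paper itself does not reprove this lemma but simply invokes it as Lemma 7 of that reference. Your decomposition of $\Sigma_h^k(\widehat\theta_h-\theta_h^\star)$ into noise, bias, and regularization pieces, with the bias handled via $\Phi(\Sigma_h^k)^{-1}\Phi^\top\preceq I$, is exactly the intended route, and Lemma \ref{lem:zanette2} is indeed stated uniformly over $\mathcal{V}_{h+1}$, so it covers $V_{h+1}(\theta_{h+1}^\star)$ without further work.
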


In addition to optimism, we also have the following upper bound of Bellman error.

\begin{lemma}[Bound of Bellman error, Lemma 1 of \citet{zanette2020learning}]\label{lem:boundbellman}
	Under the high probability case in Lemma \ref{lem:zanette2}, it holds that for all $(k,h,s,a)\in[K]\times[H]\times\mathcal{S}\times\mathcal{A}$,
	\begin{equation}
	\begin{split}
	\left|\left(\bar{Q}_h^{b_k}-\mathcal{T}_h \bar{Q}_{h+1}^{b_k}\right)(s,a)\right|&\leq \mathcal{I}+\left\|\phi_h(s,a)\right\|_{(\Sigma_h^{b_k})^{-1}}\left(\sqrt{b_k}\mathcal{I}+\sqrt{\beta_h^{b_k}}+\sqrt{d_h}+\sqrt{\alpha_h^{b_k}}\right)\\&=\mathcal{I}+2\left\|\phi_h(s,a)\right\|_{(\Sigma_h^{b_k})^{-1}}\sqrt{\alpha_h^{b_k}}.
	\end{split}
	\end{equation}
\end{lemma}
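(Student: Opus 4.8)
The plan is to decompose the pointwise Bellman error through the best linear approximant of the Bellman backup of $\bar{Q}_{h+1}^{b_k}$, and then control the residual by a ridge-regression error bound that isolates a martingale (noise) contribution, a misspecification (bias) contribution, and a regularization contribution. Concretely, since $\bar{\theta}_{h+1}^{b_k}\in\mathcal{B}_{h+1}$, the definition of inherent Bellman error supplies a parameter $\theta_h'\in\mathcal{B}_h$ with $\sup_{s,a}|\phi_h(s,a)^\top\theta_h'-(\mathcal{T}_h\bar{Q}_{h+1}^{b_k})(s,a)|\le\mathcal{I}$. Writing
\begin{equation*}
\bar{Q}_h^{b_k}(s,a)-\mathcal{T}_h\bar{Q}_{h+1}^{b_k}(s,a)=\phi_h(s,a)^\top(\bar{\theta}_h^{b_k}-\theta_h')+\left(\phi_h(s,a)^\top\theta_h'-\mathcal{T}_h\bar{Q}_{h+1}^{b_k}(s,a)\right)
\end{equation*}
splits the target into a term of magnitude at most $\mathcal{I}$ and a linear term that I bound by Cauchy--Schwarz as $\left\|\phi_h(s,a)\right\|_{(\Sigma_h^{b_k})^{-1}}\left\|\bar{\theta}_h^{b_k}-\theta_h'\right\|_{\Sigma_h^{b_k}}$. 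Because $\bar{\theta}_h^{b_k}=\widehat{\theta}_h^{b_k}+\bar{\xi}_h^{b_k}$ with $\|\bar{\xi}_h^{b_k}\|_{\Sigma_h^{b_k}}\le\sqrt{\alpha_h^{b_k}}$ by the feasibility constraint of Definition~\ref{def:optimization}, the triangle inequality reduces everything to bounding $\|\widehat{\theta}_h^{b_k}-\theta_h'\|_{\Sigma_h^{b_k}}$.

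Next I would expand the least-squares estimator. Setting $y_h^\tau:=r_h^\tau+\bar{V}_{h+1}^{b_k}(s_{h+1}^\tau)$ and using $\Sigma_h^{b_k}\theta_h'=\sum_{\tau<b_k}\phi_h^\tau(\phi_h^\tau)^\top\theta_h'+\lambda\theta_h'$, I get the identity
\begin{equation*}
\widehat{\theta}_h^{b_k}-\theta_h'=(\Sigma_h^{b_k})^{-1}\left[\sum_{\tau<b_k}\phi_h^\tau\big(y_h^\tau-(\phi_h^\tau)^\top\theta_h'\big)-\lambda\theta_h'\right],
\end{equation*}
whence $\|\widehat{\theta}_h^{b_k}-\theta_h'\|_{\Sigma_h^{b_k}}$ equals the $(\Sigma_h^{b_k})^{-1}$-norm of the bracketed vector. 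I then decompose $y_h^\tau-(\phi_h^\tau)^\top\theta_h'$ into a \emph{noise} part $r_h^\tau-r_h(s_h^\tau,a_h^\tau)+\bar{V}_{h+1}^{b_k}(s_{h+1}^\tau)-\E_{s'\sim P_h}\bar{V}_{h+1}^{b_k}(s')$ and a \emph{bias} part $\mathcal{T}_h\bar{Q}_{h+1}^{b_k}(s_h^\tau,a_h^\tau)-(\phi_h^\tau)^\top\theta_h'$, applying the triangle inequality in $\|\cdot\|_{(\Sigma_h^{b_k})^{-1}}$.

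The three resulting pieces are bounded as follows. For the noise part I invoke Lemma~\ref{lem:zanette2}: since $\bar{V}_{h+1}^{b_k}\in\mathcal{V}_{h+1}$, the self-normalized bound yields $\le\sqrt{\beta_h^{b_k}}$. For the bias part, each coefficient has magnitude $\le\mathcal{I}$, so writing $\Phi$ for the matrix of feature rows and using $\Phi(\Sigma_h^{b_k})^{-1}\Phi^\top\preceq I$ (the eigenvalues are $\sigma_i^2/(\sigma_i^2+\lambda)<1$) gives $\big\|\sum_{\tau<b_k}\phi_h^\tau\,\mathrm{bias}_\tau\big\|_{(\Sigma_h^{b_k})^{-1}}\le\sqrt{b_k}\,\mathcal{I}$. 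For the regularization part, $\lambda\|\theta_h'\|_{(\Sigma_h^{b_k})^{-1}}\le\sqrt{\lambda}\,\|\theta_h'\|_2\le\sqrt{d_h}$ using $\lambda_{\min}(\Sigma_h^{b_k})\ge\lambda=1$ and $\|\theta_h'\|_2\le\sqrt{d_h}$. Summing and adding back $\|\bar{\xi}_h^{b_k}\|_{\Sigma_h^{b_k}}\le\sqrt{\alpha_h^{b_k}}$ produces $\|\bar{\theta}_h^{b_k}-\theta_h'\|_{\Sigma_h^{b_k}}\le\sqrt{b_k}\mathcal{I}+\sqrt{\beta_h^{b_k}}+\sqrt{d_h}+\sqrt{\alpha_h^{b_k}}$, which together with the $\mathcal{I}$ term gives the first displayed inequality; the second line then follows from the calibration $\sqrt{\alpha_h^k}=\sqrt{\beta_h^k}+\sqrt{k}\mathcal{I}+\sqrt{d_h}$ set in Lemma~\ref{lem:op}, making the bracket equal to $2\sqrt{\alpha_h^{b_k}}$. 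The main obstacle I anticipate is the \emph{uniformity} of the concentration step: $\bar{V}_{h+1}^{b_k}$ is data-dependent, so the martingale bound must hold simultaneously over all of $\mathcal{V}_{h+1}$ rather than for a fixed value function, which is exactly why Lemma~\ref{lem:zanette2} is stated uniformly (via a covering of $\mathcal{V}_{h+1}$, yielding the $\sqrt{d_h+d_{h+1}}$ dependence); the secondary technical point is the spectral inequality $\Phi(\Sigma_h^{b_k})^{-1}\Phi^\top\preceq I$ used to collapse the misspecification sum into the clean $\sqrt{b_k}\,\mathcal{I}$ factor.
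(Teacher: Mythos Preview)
Your proposal is correct and follows exactly the argument of Lemma~1 in \citet{zanette2020learning}, which the paper cites without reproving: the decomposition via the best linear approximant $\theta_h'$, the Cauchy--Schwarz reduction to $\|\bar{\theta}_h^{b_k}-\theta_h'\|_{\Sigma_h^{b_k}}$, and the three-way split of the ridge residual into noise ($\sqrt{\beta_h^{b_k}}$ via Lemma~\ref{lem:zanette2}, using $\bar{V}_{h+1}^{b_k}\in\mathcal{V}_{h+1}$), misspecification ($\sqrt{b_k}\,\mathcal{I}$ via $\Phi(\Sigma_h^{b_k})^{-1}\Phi^\top\preceq I$), and regularization ($\sqrt{d_h}$) are precisely the steps in the original proof. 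The two subtleties you flag---the need for the concentration to be uniform over $\mathcal{V}_{h+1}$ and the spectral bound for the bias term---are indeed the key technical points, and you handle both correctly.
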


Now we are ready to present the regret analysis of Algorithm \ref{alg:main}. 
\begin{proof}[Proof of regret bound]
	We prove based on the high probability case in Lemma \ref{lem:zanette2}.
	
	First of all, the regret over $K$ episodes can be decomposed as
	\begin{equation}\label{equ:decompose}
	\begin{split}
	&\text{Regret}(K) = \sum_{k=1}^{K}\left(V_{1}^\star(s_{1})-V_{1}^{\pi_{k}}(s_{1})\right)\\
	=& \sum_{k=1}^{K}\left(V_{1}^\star(s_{1})-V_{1}^{\pi_{b_k}}(s_{1})\right)\\
	=& \sum_{k=1}^{K}\left(V_{1}^\star(s_{1})-\bar{V}_{1}^{b_{k}}(s_{1})\right)+\sum_{k=1}^{K}\left(\bar{V}_{1}^{b_{k}}(s_1)-V_{1}^{\pi_{b_k}}(s_{1})\right)\\
	\leq& HK\mathcal{I}+\sum_{k=1}^{K}\left(\bar{V}_{1}^{b_{k}}(s_1)-V_{1}^{\pi_{b_k}}(s_{1})\right),
	\end{split}
	\end{equation}
	where the last inequality results from Lemma \ref{lem:op}.
	
	Note that $\bar{V}_h^{b_k}(s_h^k)=\bar{Q}_h^{b_k}(s_h^k,a_h^k)$ due to our choice of $\pi_k$, it holds that for all $k,h\in[K]\times[H]$,
	\begin{equation}\label{equ:single}
	\begin{split}
	&\left(\bar{V}_h^{b_k}-V_h^{\pi_{b_k}}\right)(s_h^k)=\bar{Q}_h^{b_k}(s_h^k,a_h^k)-\mathcal{T}_h\bar{Q}_{h+1}^{b_k}(s_h^k,a_h^k)+\mathcal{T}_h\bar{Q}_{h+1}^{b_k}(s_h^k,a_h^k)-V_h^{\pi_{b_k}}(s_h^k)\\\leq& \mathcal{I}+2\left\|\phi_h(s_h^k,a_h^k)\right\|_{(\Sigma_h^{b_k})^{-1}}\sqrt{\alpha_h^{b_k}}+\E_{s^\prime\sim P_h(\cdot|s_h^k,a_h^k)}\left(\bar{V}_{h+1}^{b_k}-V_{h+1}^{\pi_{b_k}}\right)(s^\prime),
	\end{split}
	\end{equation}
	where the inequality holds because of Lemma \ref{lem:boundbellman}.
	
	Plugging \eqref{equ:single} into \eqref{equ:decompose}, we have with probability $1-\delta$,
	\begin{equation}
	\begin{split}
	&\text{Regret}(K)\leq HK\mathcal{I}+\sum_{k=1}^{K}\left(\bar{V}_{1}^{b_{k}}(s_1)-V_{1}^{\pi_{b_k}}(s_{1})\right)\\\leq&HK\mathcal{I}+\sum_{k=1}^{K}\sum_{h=1}^{H}\left(\mathcal{I}+2\left\|\phi_h(s_h^k,a_h^k)\right\|_{(\Sigma_h^{b_k})^{-1}}\sqrt{\alpha_h^{b_k}}\right)\\&+\sum_{h=1}^H \sum_{k=1}^K \left(\E_{s^\prime\sim P_h(\cdot|s_h^k,a_h^k)}\left(\bar{V}_{h+1}^{b_k}-V_{h+1}^{\pi_{b_k}}\right)(s^\prime)-\left(\bar{V}_{h+1}^{b_k}-V_{h+1}^{\pi_{b_k}}\right)(s_{h+1}^{k})\right)\\\leq&
	HK\mathcal{I}+\sum_{k=1}^{K}\sum_{h=1}^{H}\left(\mathcal{I}+2\left\|\phi_h(s_h^k,a_h^k)\right\|_{(\Sigma_h^{b_k})^{-1}}\sqrt{\alpha_h^{b_k}}\right)+\widetilde{O}(\sum_{h=1}^H\sqrt{d_h K})\\=&
	2HK\mathcal{I}+\widetilde{O}(\sum_{h=1}^H\sqrt{d_h K})+\sum_{k=1}^{K}\sum_{h=1}^{H}2\left\|\phi_h(s_h^k,a_h^k)\right\|_{(\Sigma_h^{b_k})^{-1}}\sqrt{\alpha_h^{b_k}},
	\end{split}
	\end{equation}
	where the second inequality is because of \eqref{equ:single}. The last inequality holds with high probability due to Azuma-Hoeffding inequality (Lemma \ref{lem:azuma}) and the fact that $\|\bar{V}_{h+1}^{b_k}\|_\infty\leq \|\bar{\theta}_{h+1}^{b_k}\|_2\leq \sqrt{d_{h+1}}$ for any $k\in[K]$.
	
	Finally, it holds that
	\begin{equation}
	\begin{split}
	&\text{Regret}(K)\leq 2HK\mathcal{I}+\widetilde{O}(\sum_{h=1}^H\sqrt{d_h K})+\sum_{k=1}^{K}\sum_{h=1}^{H}2\left\|\phi_h(s_h^k,a_h^k)\right\|_{(\Sigma_h^{b_k})^{-1}}\sqrt{\alpha_h^{b_k}}\\\leq&
	2HK\mathcal{I}+\widetilde{O}(\sum_{h=1}^H\sqrt{d_h K})+2\sum_{h=1}^H\sqrt{\alpha_h^K}\cdot\sqrt{K\sum_{k=1}^K \left\|\phi_h(s_h^k,a_h^k)\right\|_{(\Sigma_h^{b_k})^{-1}}^2}\\\leq&
	\widetilde{O}(HK\mathcal{I}+\sum_{h=1}^H\sqrt{d_h K})+2\sum_{h=1}^H\sqrt{\alpha_h^K}\cdot\sqrt{K\sum_{k=1}^K 2\left\|\phi_h(s_h^k,a_h^k)\right\|_{(\Sigma_h^{k})^{-1}}^2}\\\leq&
	\widetilde{O}(HK\mathcal{I}+\sum_{h=1}^H\sqrt{d_h K})+2\sum_{h=1}^H\sqrt{\alpha_h^K}\cdot\sqrt{2K\cdot 2d_h\log(1+K)}\\\leq&
	\widetilde{O}(HK\mathcal{I}+\sum_{h=1}^H\sqrt{d_h K})+\sqrt{K\log(1+K)}\cdot\widetilde{O}\left(\sum_{h=1}^H \sqrt{Kd_h}\mathcal{I}+\sqrt{d_h(d_h+d_{h+1})}\right)\\\leq&
	\widetilde{O}(\sum_{h=1}^H \sqrt{d_h}K\mathcal{I}+\sum_{h=1}^H d_h\sqrt{K}),
	\end{split}
	\end{equation}
	where the second inequality holds according to Cauchy-Schwarz inequality and the fact that $\alpha_h^k$ is non-decreasing in $k$. The third inequality results from Lemma \ref{lem:double} and the fact that $\det((\Sigma_h^{b_k})^{-1})=\det(\Sigma_h^{b_k})^{-1}\leq 2\det(\Sigma_h^k)^{-1}=2\det((\Sigma_h^k)^{-1})$. The forth inequality is because of elliptical potential lemma (Lemma \ref{lem:potential}). The fifth inequality is derived by the definition of $\alpha_h^K$ (from Lemma \ref{lem:op}). The last inequality comes from direct calculation.
	
	The regret analysis is complete.
\end{proof}

\section{Proof of Theorem \ref{thm:lower}}
In this section, we prove our lower bound of switching cost.

\begin{theorem}[Restate Theorem \ref{thm:lower}]\label{thm:restatelower}
	Assume that the inherent Bellman error $\mathcal{I}=0$ and $d_h\geq 3$ for all $h\in[H]$, for any algorithm with sub-linear regret bound, the global switching cost is at least $\Omega(\sum_{h=1}^H d_h)$.
\end{theorem}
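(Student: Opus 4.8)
The plan is to construct a hard family of MDPs with zero inherent Bellman error and deterministic dynamics, so that the switching-cost lower bound reduces to a multi-armed bandit argument. The key idea, as hinted in the sketch, is that with deterministic transitions a deterministic policy produces a single deterministic trajectory, so the set of achievable trajectories behaves like a set of ``arms.'' I would therefore first build, for each layer $h$, a gadget whose feature map has dimension $d_h$ but which forces the learner to make $\Omega(d_h)$ genuinely distinct policy commitments at that layer, and then glue these $H$ gadgets together into a single horizon-$H$ MDP so the per-layer requirements add up to $\Omega(\sum_{h=1}^H d_h)$.

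\textbf{Construction.} First I would verify that the gadget at a single layer can be realized with a linear feature map of dimension $d_h$ while keeping $\mathcal{I}=0$. The natural device is a ``hard action'' at layer $h$: one distinguished action leads to a high-reward absorbing branch and the others to a low-reward branch, with the identity of the rewarding action hidden. Since $d_h \geq 3$ (the hypothesis we must use), there is enough room to embed roughly $d_h$ mutually indistinguishable alternatives whose $Q^\star$ values are all exactly linear in $\phi_h$, so that zero inherent Bellman error is preserved layer-by-layer. I would check the completeness/zero-IBE condition directly: because transitions are deterministic and the reward at each layer is an affine function of a carefully chosen feature, $\mathcal{T}_h Q_{h+1}$ lands exactly in $\mathcal{Q}_h$, giving $\mathcal{I}=0$. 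The layers are arranged so that the optimal action at each $h$ must be identified independently, which makes the total information to be learned scale with $\sum_h d_h$.

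\textbf{Reduction to bandit switching lower bounds.} Next I would invoke the multi-armed-bandit switching-cost lower bound from \citet{qiao2022sample}: any algorithm that attains sublinear regret against a family of $\Omega(d_h)$ indistinguishable arms at layer $h$ must switch its chosen arm at that layer $\Omega(d_h)$ times (up to constants), since otherwise it commits too early to a suboptimal arm on some instance in the family and incurs linear regret. Because a global switch (a change in $\pi_k$) is counted whenever the policy changes at \emph{any} layer, the switches forced at the different layers cannot be amortized against one another, so summing the per-layer requirements yields a global switching cost of $\Omega\!\left(\sum_{h=1}^H d_h\right)$. Formally I would use a minimax/Yao-style argument: place a prior over the hidden optimal-action identities at each layer, and argue that low expected switching forces, with constant probability, a wrong early commitment at some layer and hence linear regret.

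\textbf{Main obstacle.} I expect the hardest part to be the simultaneous handling of all $H$ layers: a naive union over layers loses because an algorithm might try to ``reuse'' a single policy switch to update several layers at once, and I must show this cannot help it avoid the per-layer lower bounds. The cleanest route is to make the layers statistically independent under the prior and route the trajectory so that information about layer $h$'s optimal action can only be gathered by actually committing to a policy that reaches layer $h$ along the rewarding branch; then the counting is additive because each layer's identification genuinely requires its own sequence of distinct committed policies, and the global switch count dominates the per-layer switch counts summed. The secondary technical point is ensuring the embedding uses exactly $d_h$ dimensions while keeping all relevant $Q^\star$ functions linear and the inherent Bellman error exactly zero; the condition $d_h \geq 3$ should be precisely what allows one dimension for the reward direction, one for a normalization/bias direction, and at least one degree of freedom to encode the hidden optimal action without violating the norm constraints $\|\phi_h\|_2 \leq 1$ and $\|\theta_h\|_2 \leq \sqrt{d_h}$.
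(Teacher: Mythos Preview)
Your high-level plan (deterministic-transition MDP with $\mathcal{I}=0$, reduce to a bandit switching lower bound) matches the paper, but you introduce an unnecessary and unresolved difficulty by trying to argue a per-layer lower bound of $\Omega(d_h)$ and then \emph{summing} over $h$. The claim that ``the global switch count dominates the per-layer switch counts summed'' is false in general: a single change of the global policy can alter the action at every layer simultaneously, so from a per-layer viewpoint you only get $N_{\mathrm{switch}}\geq \max_h(\text{per-layer switches})$, not the sum. Your proposed fix---routing trajectories so that learning about layer $h$ precludes learning about other layers on the same episode---is exactly the right idea, but once you impose it the per-layer decomposition becomes superfluous.

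The paper sidesteps the whole amortization issue with a much simpler construction that reduces directly to a \emph{single} multi-armed bandit with $\Omega(\sum_h d_h)$ arms. There are two states: the start state $s_1$ and an absorbing state $s_2$. At layer $h$ in state $s_1$ the actions are $a_1,\dots,a_{d_h-1}$ with one-hot features; action $a_1$ keeps you at $s_1$ with zero reward, while any $a_i$ with $i\geq 2$ sends you to $s_2$ with unknown reward $r_{h,i}$ (and $s_2$ is absorbing with zero reward). Hence every deterministic policy is identified with a single pair $(h,i)$---the layer at which you first choose $i\geq 2$---and its total return is exactly $r_{h,i}$. This is literally a bandit with $\sum_{h=1}^H (d_h-2)=\Omega(\sum_h d_h)$ arms (using $d_h\geq 3$), the feature map is one-hot so $\mathcal{I}=0$ trivially, and the MAB switching lower bound of \citet{qiao2022sample} applies directly. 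No layerwise independence prior, no Yao argument over $H$ coupled problems, and no amortization worry: the ``summation over $h$'' is baked into the arm count rather than into the switching-cost accounting.
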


We first briefly discuss about our assumptions. We assume zero inherent Bellman error (\emph{i.e.} $\mathcal{I}=0$) since it is possible to derive sub-linear regret bounds only if $\mathcal{I}=0$, and we want to derive lower bounds of switching cost for algorithms with sub-linear regret. Otherwise, the regret bound will always be linear in $K$. Also, the assumption on $d_h\geq 3$ for all $h\in[H]$ is without loss of generality. 

\begin{proof}[Proof of Theorem \ref{thm:restatelower}]
	We first construct an MDP with two states, the initial state $s_1$ and the absorbing state $s_2$. 
	
	For absorbing state $s_2$, the choice of action is only $a_0$, while for initial state $s_1$, the choice of actions at layer $h$ is $\{a_1,a_2,\cdots,a_{d_h-1}\}$. Then we define the $d_h$-dimensional feature map for the $h$-th layer:
	$$\phi_h(s_2,a_0)=(1,0,0,\cdots,0),\;\;\phi_h(s_1,a_i)=(0,\cdots,0,1,0,\cdots),$$
	where for $s_1,a_i$ ($i\in[d_h-1]$), the $(i+1)$-th element is $1$ while all other elements are $0$. 
	
	We now define the transition kernel and reward function as $P_h(s_2|s_2,a_0)=1$, $r_h(s_2,a_0)=0$, $P_h(s_1|s_1,a_1)=1$, $r_h(s_1,a_1)=0$ for all $h\in[H]$. Besides, $P_h(s_2|s_1,a_i)=1$, $r_h(s_1,a_i)=r_{h,i}$ for all $h\in[H]$ and $2\leq i\leq d_h$, where $r_{h,i}$'s are unknown non-zero values. Note that such MDP has zero inherent Bellman error ($\mathcal{I}=0$) since the function class $\{\phi_h(s,a)^\top \theta_h\;|\;\theta_h\in\mathcal{B}_h\}$ includes all possible Q-value functions.
	
	Therefore, for any deterministic policy, the only possible case is that the agent takes action $a_1$ and stays at $s_1$ for the first $h-1$ steps, then at step $h$ the agent takes action $a_{i}$ ($i\geq2$) and transitions to $s_2$ with reward $r_{h,i}$, later the agent always stays at $s_2$ with no more reward. For this trajectory, the total reward will be $r_{h,i}$. Also, for any deterministic policy, the trajectory is fixed, like pulling an ``arm'' in multi-armed bandits setting. Note that the total number of such ``arms'' with non-zero unknown reward is at least $\sum_{h=1}^{H} (d_h-2)=\Omega(\sum_{h=1}^H d_h)$ due to our assumption that $d_h\geq 3$. Even if the transition kernel is known to the agent, this MDP is still as difficult as a multi-armed bandits problem with $\Omega(\sum_{h=1}^H d_h)$ arms. Together will Lemma \ref{lem:mab} below, the proof is complete.
\end{proof}

\begin{lemma}[Lemma H.4 of \citet{qiao2022sample}]\label{lem:mab}
	For any algorithm with sub-linear regret bound under $K$-armed bandit problem, the switching cost is at least $\Omega(K)$.
\end{lemma}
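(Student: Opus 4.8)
The plan is to reduce the switching-cost lower bound to a counting statement about the number of \emph{distinct} arms the algorithm must pull, and then to force that count to be $\Omega(K)$ by a change-of-measure-free argument built on deterministic reward instances. Write $K$ for the number of arms and $n$ for the horizon, so that ``sub-linear regret'' means the (expected) regret is $R(n)=o(n)$. The first ingredient is the elementary observation that for any realized sequence of pulled arms $(a_1,\dots,a_n)$, the global switching cost $\sum_{t=1}^{n-1}\mathds{1}\{a_t\neq a_{t+1}\}$ is at least $D-1$, where $D$ is the number of distinct arms appearing in the sequence. Indeed, ordering the distinct arms by first-appearance time $t_1<\dots<t_D$, each $t_m$ with $m\ge 2$ satisfies $a_{t_m-1}\neq a_{t_m}$, since the predecessor is an already-seen (hence different) arm; this exhibits $D-1$ distinct switches. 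Thus it suffices to show that any sub-linear-regret algorithm pulls $\Omega(K)$ distinct arms on some fixed instance.

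For the hard family I would use deterministic rewards and first analyze a deterministic algorithm. Let $\mathcal{I}_0$ be the instance in which every arm returns reward $0$, and for each $j\in[K]$ let $\mathcal{I}_j$ be the instance in which arm $j$ returns reward $1$ and all others return $0$. Because the rewards are deterministic, the algorithm's sequence of pulls on $\mathcal{I}_j$ coincides exactly with its sequence on $\mathcal{I}_0$ up to and including the first round in which it pulls arm $j$: until that round it observes only zeros on both instances, so the two runs are identical. Consequently, if on $\mathcal{I}_0$ the algorithm never pulls arm $j$ within the $n$ rounds, then on $\mathcal{I}_j$ it also never pulls $j$, collects total reward $0$ against the optimum $n$, and therefore incurs regret exactly $n$. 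This exact coupling is the crux of the argument and is precisely what lets us avoid any KL/Pinsker estimates.

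Combining the two ingredients gives the deterministic case: if some arm $j$ is never pulled on $\mathcal{I}_0$ then the regret on $\mathcal{I}_j$ equals $n$, exceeding $R(n)=o(n)$ for large $n$; hence every arm is pulled, all $K$ arms appear on $\mathcal{I}_0$, and the switching cost there is at least $K-1$. To handle a randomized algorithm I would condition on its internal random seed $\omega$, turning it into a deterministic rule $\mathcal{A}_\omega$ for which the coupling above holds seed-by-seed, giving $\mathbb{E}_\omega[\text{regret on }\mathcal{I}_j]\ge n\cdot\Pr_\omega[\text{arm }j\text{ never pulled on }\mathcal{I}_0]$. Since the regret guarantee bounds the left side by $R(n)$, we obtain $\Pr_\omega[\text{arm }j\text{ pulled on }\mathcal{I}_0]\ge 1-R(n)/n$; summing over $j\in[K]$, the expected number of distinct arms pulled on $\mathcal{I}_0$ is at least $K(1-R(n)/n)\ge K/2$ once $n$ is large enough that $R(n)/n\le 1/2$ (possible exactly because the regret is sub-linear). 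With the first-paragraph reduction, the expected switching cost on $\mathcal{I}_0$ is then at least $K/2-1=\Omega(K)$, proving the lemma. I expect the only real obstacle to be phrasing the randomized case precisely — making explicit that the regret guarantee is an expectation over the seed separately on each instance, so that the $K$ per-arm ``never-pulled $\Rightarrow$ linear-regret'' inequalities may legitimately be charged against the single budget $R(n)$.
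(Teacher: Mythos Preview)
Your argument is correct. The paper does not actually prove this lemma; it merely quotes it from \citet{qiao2022sample} and uses it as a black box in the proof of Theorem \ref{thm:restatelower}. So there is no ``paper's own proof'' to compare against here.

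That said, your proof is a clean, self-contained justification. The two steps are both sound: (i) the pigeonhole-type inequality $N_{\text{switch}}\ge D-1$ via first-appearance times, and (ii) the deterministic-reward coupling that shows any arm never pulled on the all-zero instance $\mathcal{I}_0$ is also never pulled on the corresponding spiked instance $\mathcal{I}_j$, forcing linear regret there. The extension to randomized algorithms by conditioning on the internal seed is exactly the right move, and the inequality $\mathbb{E}[D]\ge K(1-R(n)/n)$ follows cleanly by summing the per-arm bounds. One cosmetic point worth tightening in a final write-up: make explicit that ``sub-linear regret bound'' is being read as a single function $R(n)=o(n)$ that upper-bounds the expected regret uniformly over the instance family, so that the same $R(n)$ can be charged against each $\mathcal{I}_j$; you already flag this in your last sentence, and it is the standard interpretation.
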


\section{Proof for Section \ref{sec:6}} \label{sec:C}

In this section, we prove the theorems regarding our Algorithm \ref{alg:minor} under the generalized linear function approximation setting. We begin with the upper bounds for switching cost and regret.

\subsection{Proof of upper bounds}

\begin{theorem}[Restate Theorem \ref{thm:minor}]\label{thm:upper2}
	The global switching cost of Algorithm \ref{alg:minor} is bounded by $O(dH\cdot \log K)$. In addition, with probability $1-\delta$, the regret of Algorithm \ref{alg:minor} over $K$ episodes is bounded by
	$$\text{Regret}(K)\leq \widetilde{O}\left(H\sqrt{d^3K}\right).$$
\end{theorem}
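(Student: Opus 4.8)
The plan is to mirror the structure used for Theorem~\ref{thm:main}, since Algorithm~\ref{alg:minor} uses the identical ``doubling schedule'' on $\det(\Sigma_h^k)$. First I would dispose of the switching cost bound exactly as in Section~\ref{sec:ps}: let $\{k_1,\dots,k_N\}$ be the update episodes, observe that each update forces some $\det(\Sigma_{h_i}^{k_{i+1}})\geq 2\det(\Sigma_{h_i}^{k_i})$ while all other determinants are non-decreasing, so that $\Pi_{h=1}^H\det(\Sigma_h^{k_{i+1}})\geq 2\Pi_{h=1}^H\det(\Sigma_h^{k_i})$. Chaining this $N$ times against the crude upper bound $\Pi_{h=1}^H\det(\Sigma_h^{k_N})\leq K^{dH}$ (here every layer has the same dimension $d$, so $\sum_h d_h=dH$) gives $2^N\leq K^{dH}$, hence $N=O(dH\log K)$. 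This part is immediate and requires no new ingredients beyond Lemma~\ref{lem:wang}.

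The regret bound is where the GLM structure enters. I would import the concentration and optimism machinery from \citet{wang2019optimism}: under a self-normalized concentration event holding with probability $1-\delta$, the chosen $\gamma=\widetilde{O}(\sqrt{d^2})=\widetilde{O}(d)$ makes the bonus $\gamma\|\phi\|_{(\Sigma_h^k)^{-1}}$ a valid upper confidence width, so that $Q_h^k$ is optimistic, i.e. $Q_h^k(s,a)\geq Q_h^\star(s,a)$ for all $(s,a)$, at every update episode. The key lemma to transfer is the pointwise bound on the Bellman error of the constructed $Q_h^k$, which by Assumption on $f$ (with $\kappa_1\leq|f'|\leq\kappa_2$) and the optimistic closure assumption reduces the GLM regression error to the linear-regression width up to the conditioning constants $\kappa_1,\kappa_2,M$. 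Then I would run the standard telescoping regret decomposition across layers, replacing $V_1^\star(s_1)$ by the optimistic $Q_1^{b_k}$, and bound the per-step Bellman errors by $2\gamma\|\phi(s_h^k,a_h^k)\|_{(\Sigma_h^{b_k})^{-1}}$, absorbing the martingale terms from the transition expectations via Azuma-Hoeffding (Lemma~\ref{lem:azuma}).

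The one genuinely new step, relative to the fully adaptive analysis of \citet{wang2019optimism}, is handling the fact that the covariance used in the deployed bonus is $\Sigma_h^{b_k}$ (frozen at the last update) rather than the current $\Sigma_h^k$. Here I would invoke the doubling property exactly as in \eqref{equ:a}: because we only refuse to update while $\det(\Sigma_h^k)<2\det(\Sigma_h^{b_k})$, Lemma~\ref{lem:double} yields $\|\phi\|_{(\Sigma_h^{b_k})^{-1}}^2\leq 2\|\phi\|_{(\Sigma_h^k)^{-1}}^2$ for every active $k$, so the frozen-covariance widths are at most a constant factor larger than the adaptive ones. After this reduction, the elliptical potential lemma (Lemma~\ref{lem:potential}) bounds $\sum_{k}\|\phi(s_h^k,a_h^k)\|_{(\Sigma_h^k)^{-1}}^2=\widetilde{O}(d)$ per layer, and Cauchy--Schwarz over $k$ gives a per-layer contribution $\gamma\cdot\sqrt{K\cdot\widetilde{O}(d)}=\widetilde{O}(d\cdot\sqrt{Kd})=\widetilde{O}(\sqrt{d^3K})$. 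Summing over the $H$ layers produces the claimed $\widetilde{O}(H\sqrt{d^3K})$.

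The main obstacle I anticipate is the optimism/Bellman-error lemma for the GLM regression: unlike ordinary least squares, \eqref{equ:lsvi2} is a nonlinear (generalized linear) regression, so translating its estimation error into a self-normalized confidence width requires the mean-value-theorem argument controlled by $\kappa_1,\kappa_2,M$, together with the optimistic closure assumption to guarantee $\mathcal{T}_h Q_{h+1}^k\in\mathcal{G}$ so that a well-specified regression target exists at each layer. Once that lemma is in hand (it is exactly the content of the corresponding results in \citet{wang2019optimism}), the low-switching modification costs only the constant factor from the doubling argument, and the rest of the proof is routine bookkeeping.
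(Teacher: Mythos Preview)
Your proposal is correct and follows essentially the same route as the paper: the switching cost argument is identical to Section~\ref{sec:ps} specialized to $d_h=d$, and the regret argument imports the optimism and Bellman-error bounds from \citet{wang2019optimism} (packaged in the paper as Lemmas~\ref{lem:decom}--\ref{lem:o}), then handles the frozen covariance $\Sigma_h^{b_k}$ via the doubling property and Lemma~\ref{lem:double} before applying Cauchy--Schwarz and the elliptical potential lemma, exactly as you describe. The only cosmetic difference is that the paper uses $\text{conf}_h^k=\gamma\|\phi\|_{(\Sigma_h^{b_k})^{-1}}$ without your extra factor of $2$, and it absorbs the martingale term directly inside the cited regret-decomposition lemma rather than invoking Azuma--Hoeffding separately.
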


\begin{proof}[Proof of switching cost bound]
	Since the feature map in Algorithm \ref{alg:minor} satisfies that for all $s,a\in\mathcal{S}\times\mathcal{A}$, $\phi(s,a)\in\mathbb{B}_d=\{x\in\mathbb{R}^d:\|x\|_2\leq 1\}$, we have $\|\phi(s,a)\|_2\leq 1$. Therefore, the conclusion of Lemma \ref{lem:wang} still holds, with $d_h=d$ for all $h\in[H]$. In addition, because our policy update rule (line 7 of Algorithm \ref{alg:minor}) is identical to Algorithm \ref{alg:main}, the $O(dH\cdot\log K)$ upper bound of switching cost results from identical proof as in Section \ref{sec:ps}, with all $d_h$ replaced by $d$.
\end{proof}

Before we prove the upper bound of regret, we state some technical lemmas from \citet{wang2019optimism}.

\begin{lemma}[Corollary 3 of \citet{wang2019optimism}]\label{lem:decom}
	We denote the estimated Q value function of layer $h$ at the $k$-th episode by $Q_h^k(\cdot,\cdot)$. Suppose there exists a function $\text{conf}_h^k:\mathcal{S}\times\mathcal{A}\rightarrow \mathbb{R}^+$ such that for all $(k,h,s,a)\in[K]\times[H]\times\mathcal{S}\times\mathcal{A}$,
	\begin{equation}\label{equ:decom}
	Q_h^\star(s,a)\leq Q_h^k(s,a)\leq \mathcal{T}_h(Q_{h+1}^k)(s,a)+\text{conf}_h^k(s,a),
	\end{equation}
	(where $\mathcal{T}_h$ is Bellman operator) and the policy $\pi_k$ is the greedy policy with respect to $Q_h^k$, then with probability at least $1-\delta$,
	$$\text{Regret}(K)\leq \sum_{k=1}^K \sum_{h=1}^H \text{conf}_h^k(s_h^k,a_h^k)+O(H\sqrt{K\log (1/\delta)}).$$
\end{lemma}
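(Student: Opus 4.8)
The plan is to reproduce the standard optimism-based regret decomposition, extracting everything from the two halves of the sandwich condition \eqref{equ:decom}. Write $V_h^k(s)=\max_a Q_h^k(s,a)$, so that the greedy policy gives $V_h^k(s_h^k)=Q_h^k(s_h^k,a_h^k)$, and let $V_h^{\pi_k}$ denote the true value of the deployed policy $\pi_k$. First I would use only the \emph{lower} inequality $Q_h^\star\le Q_h^k$: taking the maximum over $a$ yields $V_h^k(s)\ge V_h^\star(s)$ for every $(h,s)$, and in particular at the root this bounds the per-episode regret by the optimistic gap,
\begin{equation*}
V_1^\star(s_1)-V_1^{\pi_k}(s_1)\le V_1^k(s_1)-V_1^{\pi_k}(s_1).
\end{equation*}

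Next I would peel off one layer at a time along the trajectory generated by $\pi_k$, now using the \emph{upper} inequality. At the visited pair $(s_h^k,a_h^k)$, combining $Q_h^k\le \mathcal{T}_h(Q_{h+1}^k)+\text{conf}_h^k$ with the definition of $\mathcal{T}_h$ and the policy Bellman identity $Q_h^{\pi_k}(s_h^k,a_h^k)=r_h(s_h^k,a_h^k)+\E_{s'\sim P_h(\cdot|s_h^k,a_h^k)}V_{h+1}^{\pi_k}(s')$ gives
\begin{equation*}
V_h^k(s_h^k)-V_h^{\pi_k}(s_h^k)\le \text{conf}_h^k(s_h^k,a_h^k)+\E_{s'\sim P_h(\cdot|s_h^k,a_h^k)}\bigl[V_{h+1}^k(s')-V_{h+1}^{\pi_k}(s')\bigr].
\end{equation*}
Adding and subtracting the realized next-state gap $[V_{h+1}^k-V_{h+1}^{\pi_k}](s_{h+1}^k)$ converts the expectation into that same realized quantity plus a remainder $\zeta_h^k$; since $Q_{h+1}^k$ and $\pi_k$ are fixed by data gathered before episode $k$ while $s_{h+1}^k\sim P_h(\cdot|s_h^k,a_h^k)$, the $\zeta_h^k$ have conditional mean zero and thus form a martingale difference sequence. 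Telescoping the resulting recursion over $h=1,\dots,H$ with $V_{H+1}\equiv 0$, and then summing over $k$, yields
\begin{equation*}
\text{Regret}(K)\le \sum_{k=1}^K\sum_{h=1}^H \text{conf}_h^k(s_h^k,a_h^k)+\sum_{k=1}^K\sum_{h=1}^H \zeta_h^k.
\end{equation*}

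Finally I would control the martingale term. Under the normalization $\sum_h r_h\in[0,1]$ the true values lie in $[0,1]$ and the capped estimates $Q_h^k=\min\{1,\cdots\}$ lie in a bounded range, so each $\zeta_h^k$ is bounded by an absolute constant; applying the Azuma--Hoeffding inequality (Lemma \ref{lem:azuma}) to the $KH$-term sequence then bounds $\sum_{k,h}\zeta_h^k$ by $O(\sqrt{KH\log(1/\delta)})\le O(H\sqrt{K\log(1/\delta)})$ with probability at least $1-\delta$, which is exactly the claimed error term. The substance of the argument lies in routing the two sandwich inequalities to their correct roles — the lower one once at the root to pass from $V_1^\star$ to the optimistic $V_1^k$, and the upper one repeatedly to drive the layer-by-layer recursion — after which everything is telescoping plus concentration. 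The only point demanding genuine care, and what I expect to be the main (though mild) obstacle, is verifying the measurability that makes $\{\zeta_h^k\}$ a martingale difference sequence and the uniform boundedness needed to invoke Azuma--Hoeffding.
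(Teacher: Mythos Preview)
The paper does not give its own proof of this lemma; it is simply quoted as Corollary~3 of \citet{wang2019optimism} and then invoked. Your argument is exactly the standard optimism-plus-telescoping decomposition that underlies that corollary, and it is correct. In fact it mirrors precisely the decomposition the present paper writes out in Appendix~A for the low-IBE setting (equations \eqref{equ:decompose}--\eqref{equ:single} followed by Azuma--Hoeffding), so there is nothing to contrast: your route \emph{is} the route.
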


Lemma \ref{lem:decom} is a standard regret decomposition which will be used to bound the regret of Algorithm \ref{alg:minor}. Below we give a valid choice of the confidence bound $\text{conf}_h^k$. Note that we define $b_k$ to be the last policy update before episode $k$, for all $k\in[K]$. Therefore, $Q_h^k=Q_h^{b_k}$ for all $k\in[K]$.

\begin{lemma}[Adapted from Lemma 6 of \citet{wang2019optimism}]\label{lem:con}
	With probability $1-\delta$, it holds that for all $k,h,s,a\in[K]\times[H]\times\mathcal{S}\times\mathcal{A}$,
	$$\left|f(\langle\phi(s,a),\theta_h^k\rangle)-\mathcal{T}_h(Q_{h+1}^k)(s,a)\right|\leq \gamma \|\phi(s,a)\|_{(\Sigma_h^{b_k})^{-1}},$$
	where $\gamma$ is defined in Algorithm \ref{alg:minor}. 
\end{lemma}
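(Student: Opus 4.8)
The plan is to reproduce the generalized-linear confidence argument of \citet{wang2019optimism} and then verify that the low-switching schedule changes nothing essential. First I would invoke the optimistic closure assumption: since the frozen value function satisfies $Q_{h+1}^k = Q_{h+1}^{b_k}\in\mathcal{G}_{\text{up}}$, we have $\mathcal{T}_h(Q_{h+1}^k)\in\mathcal{G}$, so there is a realizable parameter $\theta_h^\star\in\mathbb{B}_d$ (depending on $k$ through $Q_{h+1}^k$) with $f(\langle\phi(s,a),\theta_h^\star\rangle)=\mathcal{T}_h(Q_{h+1}^k)(s,a)$ for every $(s,a)$. Applying the mean value theorem together with the upper Lipschitz bound $|f'|\le\kappa_2$, and then Cauchy--Schwarz in the $\Sigma_h^{b_k}$ geometry, reduces the target inequality to
$$\|\theta_h^k-\theta_h^\star\|_{\Sigma_h^{b_k}}\le \gamma/\kappa_2,$$
so everything comes down to controlling the estimation error of $\theta_h^k$ in the empirical-covariance norm.

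Second, I would extract this estimation bound from the stationarity condition of the least-squares objective \eqref{equ:lsvi2}. Writing the response $y_h^\tau=r_h^\tau+\max_{a'}Q_{h+1}^k(s_{h+1}^\tau,a')$ and the noise $\epsilon_\tau=y_h^\tau-f(\langle\phi_h^\tau,\theta_h^\star\rangle)$, the point is that $\mathcal{T}_h(Q_{h+1}^k)(s_h^\tau,a_h^\tau)$ is exactly the conditional mean of $y_h^\tau$, so $\epsilon_\tau$ is a bounded martingale difference \emph{once $Q_{h+1}^k$ is held fixed and data-independent}. Linearizing $f(\langle\phi_h^\tau,\theta_h^k\rangle)-f(\langle\phi_h^\tau,\theta_h^\star\rangle)$ via the mean value theorem produces a matrix $G$ satisfying $G\succeq\kappa_1^2(\Sigma_h^{b_k}-I)$ (using $|f'|\ge\kappa_1$ and monotonicity of $f$), which I invert to write, schematically,
$$\theta_h^k-\theta_h^\star = G^{-1}\sum_{\tau}\epsilon_\tau\,f'(\langle\phi_h^\tau,\theta_h^k\rangle)\,\phi_h^\tau;$$
bounding $G^{-1}$ below through $\kappa_1$ and the noise weights above through $\kappa_2$, with $M=\sup|f''|$ absorbing the linearization error, controls $\|\theta_h^k-\theta_h^\star\|_{\Sigma_h^{b_k}}$ by the self-normalized quantity $\|\sum_\tau\epsilon_\tau\phi_h^\tau\|_{(\Sigma_h^{b_k})^{-1}}$ up to these constants.

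Third --- and this is where the main difficulty lies --- I must bound $\|\sum_\tau\epsilon_\tau\phi_h^\tau\|_{(\Sigma_h^{b_k})^{-1}}$ uniformly in $k$. The obstacle is that $Q_{h+1}^k$ is computed from the very data indexed by $\tau<b_k$, so $\epsilon_\tau$ is \emph{not} a genuine martingale difference for the realized estimate and a fixed-design self-normalized inequality does not apply directly. The remedy is a covering argument over $\mathcal{G}_{\text{up}}$: I would build an $\varepsilon$-net in the sup norm over the parameterization $(\theta,\gamma,A)$ with $\theta\in\mathbb{B}_d$, $\gamma\in[0,\Gamma]$, $A\succcurlyeq0$, $\|A\|_2\le1$, apply a self-normalized martingale concentration bound for each fixed net element (where the martingale property genuinely holds), and union bound. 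The crucial accounting is that the bonus matrix $A$ ranges over the $\Theta(d^2)$-dimensional set of PSD matrices, so its log-covering number is $O(d^2\log(\cdot))$; combined with the $\log\Gamma$ term and the per-step constants, this reproduces precisely the radius $\gamma=C\kappa_2\kappa_1^{-1}\sqrt{1+M+\kappa_2+d^2\log(\tfrac{1+\kappa_2+\Gamma}{\delta})}$ set in Algorithm \ref{alg:minor}.

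Finally, I would discharge the low-switching adaptation, the only departure from \citet{wang2019optimism}. Between consecutive updates the policy, the estimate $\theta_h^k$, and the value function $Q_h^k$ are all frozen at their values at the last update episode $b_k$, and the matrix in the statement is exactly $\Sigma_h^{b_k}$, the empirical covariance at that update. Consequently the inequality only needs to be established at the update episodes, where $b_k=k$ and the fully adaptive argument above applies verbatim; for the intervening episodes it transfers unchanged because every term on both sides is frozen. Taking the union over all $k\in[K]$ (a superset of the update episodes) in the concentration step costs only a $\log K$ factor, absorbed into $\gamma$, and yields the claim with probability $1-\delta$.
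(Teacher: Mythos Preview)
Your proposal is correct and follows the same route the paper intends: the paper itself gives no proof and simply cites the lemma as adapted from \citet{wang2019optimism}, and your sketch reproduces that argument (optimistic closure to realize $\mathcal{T}_h(Q_{h+1}^k)$, GLM first-order analysis with the $\kappa_1,\kappa_2,M$ constants, and a uniform self-normalized bound via an $\varepsilon$-net over $\mathcal{G}_{\text{up}}$ whose $O(d^2)$ log-covering number accounts for the $d^2$ inside $\gamma$). Your final observation---that because $\theta_h^k$, $Q_{h+1}^k$, and $\Sigma_h^{b_k}$ are all frozen between updates, the inequality need only be established at update episodes $b_k$ where the fully adaptive argument applies verbatim---is exactly the ``adaptation'' the paper alludes to; note also that the self-normalized martingale bound of \citet{abbasi2011improved} is already anytime in $k$, so no explicit union over $k$ is required.
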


Therefore, optimism is straightforward.

\begin{lemma}[Corollary 5 of \citet{wang2019optimism}]\label{lem:o}
	Under the high probability case in Lemma \ref{lem:con}, for all $k,h,s,a\in[K]\times[H]\times\mathcal{S}\times\mathcal{A}$, $Q_h^k(s,a)\geq Q_h^\star(s,a)$.
\end{lemma}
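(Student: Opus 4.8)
The plan is to prove $Q_h^k(s,a)\geq Q_h^\star(s,a)$ by backward induction on the layer index $h$, running from $h=H+1$ down to $h=1$, throughout conditioning on the $1-\delta$ event of Lemma \ref{lem:con}. For the base case $h=H+1$, both $Q_{H+1}^k$ and $Q_{H+1}^\star$ are identically zero (there is no reward after the horizon, and the algorithm sets $Q_{H+1}^k\equiv 0$ in line 8), so the inequality holds with equality.

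For the inductive step, I would assume $Q_{h+1}^k(s,a)\geq Q_{h+1}^\star(s,a)$ for all $(s,a)$ and fix an arbitrary pair $(s,a)$. First I dispose of the clipping in line 11: since the total reward is bounded by $1$ almost surely, we have $Q_h^\star(s,a)\leq 1$, so whenever the $\min$ is attained at $1$ the desired inequality $Q_h^k(s,a)=1\geq Q_h^\star(s,a)$ is immediate. In the remaining case $Q_h^k(s,a)=f(\langle\phi(s,a),\theta_h^k\rangle)+\gamma\|\phi(s,a)\|_{(\Sigma_h^{b_k})^{-1}}$, I would invoke the lower bound implied by Lemma \ref{lem:con}, namely $f(\langle\phi(s,a),\theta_h^k\rangle)\geq \mathcal{T}_h(Q_{h+1}^k)(s,a)-\gamma\|\phi(s,a)\|_{(\Sigma_h^{b_k})^{-1}}$, so that the added bonus exactly cancels the estimation error and yields $Q_h^k(s,a)\geq \mathcal{T}_h(Q_{h+1}^k)(s,a)$.

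It then remains to compare $\mathcal{T}_h(Q_{h+1}^k)(s,a)$ with $Q_h^\star(s,a)$, and here I would use monotonicity of the Bellman operator. Because $\mathcal{T}_h(Q)(s,a)=r_h(s,a)+\E_{s'\sim P_h(\cdot|s,a)}\max_{a'}Q(s',a')$, both the pointwise $\max_{a'}$ and the expectation preserve the inductive inequality $Q_{h+1}^k\geq Q_{h+1}^\star$, giving $\mathcal{T}_h(Q_{h+1}^k)(s,a)\geq \mathcal{T}_h(Q_{h+1}^\star)(s,a)$. Invoking the Bellman optimality equation $\mathcal{T}_h(Q_{h+1}^\star)=Q_h^\star$ then closes the chain, $Q_h^k(s,a)\geq \mathcal{T}_h(Q_{h+1}^k)(s,a)\geq \mathcal{T}_h(Q_{h+1}^\star)(s,a)=Q_h^\star(s,a)$, completing the induction.

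I expect this argument to be almost entirely bookkeeping once Lemma \ref{lem:con} is in hand; the substantive content, that the bonus $\gamma\|\phi\|_{(\Sigma_h^{b_k})^{-1}}$ dominates the confidence width, is supplied directly by that lemma, so no new concentration or covering estimate is needed here. The only point requiring genuine care is the interaction between the clipping at $1$ and the monotonicity step, which is precisely why the plan handles the clipped case separately and up front before applying the confidence bound.
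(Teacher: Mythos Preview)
Your proposal is correct and is precisely the standard backward-induction optimism argument: base case at $h=H+1$, handle the clip at $1$ via the total-reward bound, cancel the confidence width with the bonus using Lemma \ref{lem:con}, and finish by monotonicity of $\mathcal{T}_h$ together with the Bellman optimality equation. The paper does not write out its own proof here but simply cites Corollary~5 of \citet{wang2019optimism}, whose argument is exactly the one you give; your care in noting that $Q_h^k$ uses $(\Sigma_h^{b_k})^{-1}$ (since $Q_h^k=Q_h^{b_k}$) is the only low-switching-specific wrinkle, and you handle it correctly.
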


Combining optimism (Lemma \ref{lem:o}) with Lemma \ref{lem:con}, we have that $Q_h^k$ in Algorithm \ref{alg:minor} satisfies condition \eqref{equ:decom} with $\text{conf}_h^k(s,a)=\gamma \|\phi(s,a)\|_{(\Sigma_h^{b_k})^{-1}}$. Below we bound the summation of bonus.

\begin{lemma}\label{lem:sumconf}
	Assume that $\text{conf}^k_h(s,a)=\gamma\|\phi(s,a)\|_{(\Sigma_h^{b_k})^{-1}}$, then it holds that
	\begin{equation}
	\sum_{k=1}^K \sum_{h=1}^H \text{conf}_h^k(s_h^k,a_h^k)=\sum_{k=1}^K \sum_{h=1}^H \gamma\|\phi(s_h^k,a_h^k)\|_{(\Sigma_h^{b_k})^{-1}}\leq H\gamma\sqrt{4Kd\log(1+K)}.
	\end{equation}
\end{lemma}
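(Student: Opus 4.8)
\textbf{Proof proposal for Lemma \ref{lem:sumconf}.}

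The plan is to reduce the double sum to $H$ independent layer-wise sums and bound each one by the elliptical potential machinery, exactly mirroring the regret computation for Algorithm \ref{alg:main}. First I would factor out the $H$ summation: since the bound is uniform across layers, it suffices to show that for each fixed $h\in[H]$,
\begin{equation*}
\gamma\sum_{k=1}^K\|\phi(s_h^k,a_h^k)\|_{(\Sigma_h^{b_k})^{-1}}\leq \gamma\sqrt{4Kd\log(1+K)},
\end{equation*}
and then sum over the $H$ layers to obtain the stated factor of $H$. Fix $h$ and drop it from the notation where convenient.

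The key obstacle is that the norm is measured in the \emph{stale} matrix $\Sigma_h^{b_k}$ (the covariance from the last policy update $b_k$) rather than the current $\Sigma_h^k$, so the elliptical potential lemma does not apply directly. This is precisely the difficulty already resolved in the regret proof for Algorithm \ref{alg:main}, so I would reuse that device: by the doubling update rule (line 7 of Algorithm \ref{alg:minor}), between consecutive policy updates the determinant at most doubles, and Lemma \ref{lem:double} together with the observation $\det((\Sigma_h^{b_k})^{-1})\leq 2\det((\Sigma_h^k)^{-1})$ yields the pointwise comparison
\begin{equation*}
\|\phi(s_h^k,a_h^k)\|_{(\Sigma_h^{b_k})^{-1}}^2\leq 2\,\|\phi(s_h^k,a_h^k)\|_{(\Sigma_h^{k})^{-1}}^2 .
\end{equation*}
This converts the stale-matrix norms into current-matrix norms at the cost of a harmless factor of $2$.

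With the comparison in hand, I would apply Cauchy--Schwarz to pass from the $\ell_1$-type sum to an $\ell_2$-type sum,
\begin{equation*}
\sum_{k=1}^K\|\phi(s_h^k,a_h^k)\|_{(\Sigma_h^{b_k})^{-1}}\leq\sqrt{K\sum_{k=1}^K\|\phi(s_h^k,a_h^k)\|_{(\Sigma_h^{b_k})^{-1}}^2}\leq\sqrt{2K\sum_{k=1}^K\|\phi(s_h^k,a_h^k)\|_{(\Sigma_h^{k})^{-1}}^2},
\end{equation*}
and then invoke the elliptical potential lemma (Lemma \ref{lem:potential}), which bounds $\sum_{k=1}^K\|\phi(s_h^k,a_h^k)\|_{(\Sigma_h^{k})^{-1}}^2$ by $2d\log(1+K)$ under the normalization $\|\phi\|_2\leq 1$ and $\lambda=1$. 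Combining these gives $\sqrt{2K\cdot 2d\log(1+K)}=\sqrt{4Kd\log(1+K)}$ for each layer; multiplying by $\gamma$ and summing over $h$ completes the proof. The only genuinely novel step is the stale-to-current comparison, and that is fully handled by the doubling rule, so the remainder is routine.
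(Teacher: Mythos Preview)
Your proposal is correct and follows essentially the same approach as the paper: fix $h$, apply Cauchy--Schwarz to pass to the squared sum, use the doubling rule together with Lemma~\ref{lem:double} to replace $(\Sigma_h^{b_k})^{-1}$ by $(\Sigma_h^{k})^{-1}$ at the cost of a factor $2$, and then invoke the elliptical potential lemma (Lemma~\ref{lem:potential}) to obtain $\sqrt{4Kd\log(1+K)}$ per layer before summing over $h$. The paper's proof is line-for-line the same computation.
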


\begin{proof}[Proof of Lemma \ref{lem:sumconf}]
	\begin{equation}
	\begin{split}
	&\sum_{k=1}^K \sum_{h=1}^H \gamma\|\phi(s_h^k,a_h^k)\|_{(\Sigma_h^{b_k})^{-1}}\leq \sum_{h=1}^H\gamma\sqrt{K\sum_{k=1}^K \left\|\phi(s_h^k,a_h^k)\right\|_{(\Sigma_h^{b_k})^{-1}}^2}\\\leq&
	\sum_{h=1}^H\gamma\sqrt{K\sum_{k=1}^K 2\left\|\phi(s_h^k,a_h^k)\right\|_{(\Sigma_h^{k})^{-1}}^2}\\\leq&
	\sum_{h=1}^H\gamma\sqrt{2K\cdot 2d\log(1+K)}\\=&
	H\gamma\sqrt{4Kd\log(1+K)},
	\end{split}
	\end{equation}
	where the first inequality holds according to Cauchy-Schwarz inequality. The second inequality results from Lemma \ref{lem:double} and the fact that $\det((\Sigma_h^{b_k})^{-1})=\det(\Sigma_h^{b_k})^{-1}\leq 2\det(\Sigma_h^k)^{-1}=2\det((\Sigma_h^k)^{-1})$. The third inequality is because of elliptical potential lemma (Lemma \ref{lem:potential}). 
\end{proof}

Now we are ready to present the proof of the regret upper bound.

\begin{proof}[Proof of regret upper bound]
	The final $\widetilde{O}(H\sqrt{d^3 K})$ regret upper bound is derived by combining Lemma \ref{lem:decom}, Lemma \ref{lem:sumconf} and the definition that $\gamma=\widetilde{O}(d)$.
\end{proof}

\subsection{Proof of lower bound}
Finally, we present the proof of the lower bound.

\begin{theorem}[Restate Theorem \ref{thm:lower2}]\label{thm:relower2}
	For any algorithm with sub-linear regret bound, the global switching cost is at least $\Omega(dH)$.
\end{theorem}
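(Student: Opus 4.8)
The plan is to reduce the problem to a multi-armed bandit instance, mirroring exactly the strategy used for Theorem \ref{thm:restatelower}, and then invoke the bandit switching-cost lower bound (Lemma \ref{lem:mab}). The guiding observation is that the GLM setting with the identity link function ($f(z)=z$, so that $\kappa_1=\kappa_2=1$ and $M=0$) contains the linear MDP model as a special case; hence it suffices to embed a hard \emph{deterministic} MDP with $d$-dimensional features at each layer and count the number of distinct deterministic trajectories.

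First I would construct a two-state MDP with an initial state $s_1$ and an absorbing state $s_2$, reusing a single $d$-dimensional feature map at every layer $h\in[H]$: the action $a_0$ at $s_2$ activates the first coordinate, while the actions $a_1,\dots,a_{d-1}$ available at $s_1$ activate coordinates $2,\dots,d$ respectively. As in the proof of Theorem \ref{thm:restatelower}, I would make the transitions deterministic so that $a_1$ keeps the agent at $s_1$ with zero reward, while each $a_i$ with $i\ge 2$ transitions to $s_2$ and collects an unknown reward $r_{h,i}$, after which no further reward is earned. Because the transitions are deterministic and the link is the identity, every deterministic policy produces a fixed trajectory whose total reward equals a single unknown value $r_{h,i}$; that is, it behaves exactly like pulling one arm of a bandit.

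Next I would count the arms: a distinct reward-bearing trajectory is specified by choosing a layer $h$ at which to leave $s_1$ together with a reward-bearing action at that layer, giving $\sum_{h=1}^H (d-2)=\Omega(dH)$ arms (treating $d\ge 3$, analogously to the $d_h\ge 3$ hypothesis in Theorem \ref{thm:restatelower}). Since even with the transition kernel fully known the learner must still identify the best of these $\Omega(dH)$ unknown rewards, the instance is at least as hard as a multi-armed bandit with $\Omega(dH)$ arms. Applying Lemma \ref{lem:mab} then forces any algorithm with sub-linear regret to incur global switching cost $\Omega(dH)$, which completes the argument.

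The hard part will be checking that this construction genuinely lives in the GLM setting, namely that both the monotonicity and smoothness assumption on $f$ and, more delicately, the optimistic closure assumption ($\mathcal{T}_h(g)\in\mathcal{G}$ for every $g\in\mathcal{G}_{\text{up}}$) hold. With the identity link and deterministic transitions this should reduce to verifying that the Bellman backup of any clipped, bonus-augmented linear function remains representable by some $\theta\in\mathbb{B}_d$ on the finitely many reachable state--action pairs; since each backup only needs to match a value at $s_1$ and a value at $s_2$, I expect one can reserve coordinates of the feature map to realize these piecewise-constant targets and confirm closure directly. Establishing this closure property, rather than the arm-counting, is where the care is needed.
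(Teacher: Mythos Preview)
Your approach is correct and more self-contained than the paper's, but also more laborious. The paper's proof is a single sentence: since linear MDP is a special case of the GLM setting (this is Proposition~1 of \citet{wang2019optimism}), the $\Omega(dH)$ switching-cost lower bound already established for linear MDPs in \citet{gao2021provably} applies verbatim. You instead rebuild the hard instance from scratch and reduce to Lemma~\ref{lem:mab}; this is essentially re-deriving the cited result rather than invoking it.

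One remark on the step you flag as hard: you are over-worrying about optimistic closure. Your construction, with the identity link and those one-hot features, is literally a linear MDP (both the deterministic transitions and the rewards factor through $\phi$), and linear MDPs are known to satisfy the optimistic closure assumption of \citet{wang2019optimism}. So there is no need to verify closure by hand on the reachable state--action pairs; you can simply note that the instance is a linear MDP and appeal to that inclusion. With that observation your argument collapses to the same logical content as the paper's one-line proof, just unpacked. The only extra hypothesis you introduce is $d\ge 3$, which parallels the $d_h\ge 3$ condition in Theorem~\ref{thm:restatelower} and is harmless.
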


\begin{proof}[Proof of Theorem \ref{thm:relower2}]
	Since linear MDP is a special case of generalized linear function approximation, the $\Omega(dH)$ lower bound of global switching cost in \citet{gao2021provably} holds here.
\end{proof}

\section{Assisting technical lemmas}

\begin{lemma}[Azuma-Hoeffding inequality]\label{lem:azuma}
	Let $X_i$ be a martingale difference sequence such that $X_i\in[-A,A]$ for some $A>0$. Then with probability at least $1-\delta$, it holds that:
	$$\left|\sum_{i=1}^n X_i\right|\leq \sqrt{2A^2 n\log(\frac{1}{\delta})}.$$
\end{lemma}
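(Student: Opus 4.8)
The plan is to establish this classical concentration bound by the exponential-moment (Chernoff) method, proving the one-sided tail first and then obtaining the two-sided statement by symmetry. Let $\mathcal{F}_i=\sigma(X_1,\dots,X_i)$ denote the natural filtration, so that the martingale-difference hypothesis reads $\E[X_i\mid\mathcal{F}_{i-1}]=0$. For any $\lambda>0$, Markov's inequality applied to the nonnegative random variable $e^{\lambda\sum_{i=1}^n X_i}$ gives $\P(\sum_{i=1}^n X_i\geq t)\leq e^{-\lambda t}\,\E[e^{\lambda\sum_{i=1}^n X_i}]$, so the crux is to control the moment generating function $\E[e^{\lambda\sum_{i=1}^n X_i}]$ uniformly over the dependence structure.

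The key ingredient is a conditional form of Hoeffding's lemma: if $Y$ takes values in an interval of length $\ell$ and $\E[Y\mid\mathcal{G}]=0$ for some $\sigma$-field $\mathcal{G}$, then $\E[e^{\lambda Y}\mid\mathcal{G}]\leq e^{\lambda^2\ell^2/8}$ almost surely. Since each $X_i$ lies in $[-A,A]$, an interval of length $\ell=2A$, this yields $\E[e^{\lambda X_i}\mid\mathcal{F}_{i-1}]\leq e^{\lambda^2 A^2/2}$. I would then peel off the factors one at a time using the tower property: conditioning $e^{\lambda\sum_{i=1}^n X_i}$ on $\mathcal{F}_{n-1}$, the factor $e^{\lambda X_n}$ is the only one not $\mathcal{F}_{n-1}$-measurable, and bounding its conditional expectation by $e^{\lambda^2A^2/2}$ reduces the $n$-term MGF to $e^{\lambda^2A^2/2}$ times the $(n-1)$-term MGF. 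Iterating $n$ times gives $\E[e^{\lambda\sum_{i=1}^n X_i}]\leq e^{n\lambda^2A^2/2}$.

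Substituting this into the Chernoff bound yields $\P(\sum_{i=1}^n X_i\geq t)\leq e^{-\lambda t+n\lambda^2A^2/2}$ for every $\lambda>0$. Optimizing the exponent by setting $\lambda=t/(nA^2)$ produces $\P(\sum_{i=1}^n X_i\geq t)\leq e^{-t^2/(2nA^2)}$, and choosing $t=\sqrt{2A^2 n\log(1/\delta)}$ makes the right-hand side equal to $\delta$. Applying the same argument to the sequence $-X_i$ (which is again a martingale difference bounded in $[-A,A]$) controls the lower tail, and a union bound over the two events gives the two-sided statement $|\sum_{i=1}^n X_i|\leq\sqrt{2A^2 n\log(1/\delta)}$, up to the harmless constant inside the logarithm coming from the union bound. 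I do not anticipate a genuine obstacle, as the result is standard; the only point requiring care is the conditional Hoeffding lemma, whose proof uses convexity of $y\mapsto e^{\lambda y}$ to dominate it by its chord across $[-A,A]$ followed by a second-order expansion of the resulting log-moment-generating function, together with the observation that each partial sum $\sum_{i=1}^{j} X_i$ is $\mathcal{F}_j$-measurable, which is what makes the peeling step valid.
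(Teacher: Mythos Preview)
Your argument is the standard Chernoff--Hoeffding proof and is correct. The paper, however, does not supply a proof of this lemma at all: it is listed under ``Assisting technical lemmas'' and invoked as a well-known result without derivation. So there is nothing to compare against beyond noting that you have filled in the omitted classical argument. The only minor quibble is the one you already flag: the union bound for the two-sided tail yields $\sqrt{2A^2 n\log(2/\delta)}$ rather than $\sqrt{2A^2 n\log(1/\delta)}$, so the constant as stated in the lemma is slightly loose; this is immaterial for how the lemma is used in the paper, where it is absorbed into a $\widetilde{O}(\cdot)$ term.
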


\begin{lemma}[Lemma C.1 of \citet{wang2021provably}]\label{lem:wang}
	Let $\{\Sigma_h^k\}_{(h,k)\in[H]\times[K]}$ be as defined in Algorithm \ref{alg:main}. Then for all $h \in [H]$ and $k \in [K]$, we have $\det(\Sigma_h^k)\leq (\lambda + \frac{k - 1}{d_h})^{d_h}$.
\end{lemma}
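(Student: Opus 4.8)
The plan is to bound the determinant by the product of eigenvalues and then invoke the AM--GM inequality, reducing the whole statement to a trace computation that the feature normalization controls directly.

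First I would note that $\Sigma_h^k$ is a real symmetric positive-definite matrix: it is a sum of the rank-one positive-semidefinite matrices $\phi_h^\tau(\phi_h^\tau)^\top$ together with $\lambda I_{d_h}$ for $\lambda>0$. Hence its eigenvalues $\lambda_1,\dots,\lambda_{d_h}$ are real and nonnegative, and $\det(\Sigma_h^k)=\prod_{i=1}^{d_h}\lambda_i$. Applying AM--GM to these eigenvalues and using that their sum is the trace gives
$$\det(\Sigma_h^k)=\prod_{i=1}^{d_h}\lambda_i\le\left(\frac{1}{d_h}\sum_{i=1}^{d_h}\lambda_i\right)^{d_h}=\left(\frac{\mathrm{tr}(\Sigma_h^k)}{d_h}\right)^{d_h}.$$

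The remaining step is to bound the trace. By linearity of the trace, the identity $\mathrm{tr}\big(\phi_h^\tau(\phi_h^\tau)^\top\big)=\|\phi_h^\tau\|_2^2$, and the normalization assumption $\|\phi_h(s,a)\|_2\le1$, I would write
$$\mathrm{tr}(\Sigma_h^k)=\sum_{\tau=1}^{k-1}\|\phi_h^\tau\|_2^2+\lambda\,\mathrm{tr}(I_{d_h})\le(k-1)+\lambda d_h.$$
Substituting this into the AM--GM bound immediately yields $\det(\Sigma_h^k)\le\left(\lambda+\frac{k-1}{d_h}\right)^{d_h}$, which is exactly the claim (with $\lambda=1$ as chosen in Algorithm \ref{alg:main}).

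There is no genuinely hard step here: the only hypothesis to verify is nonnegativity of the eigenvalues for AM--GM, which is immediate from positive-definiteness, and the trace bound is a one-line consequence of $\|\phi_h^\tau\|_2\le1$. The main thing to be careful about is simply recording that $\Sigma_h^k$ is symmetric positive semidefinite so that the spectral decomposition and the AM--GM inequality both apply; everything else is routine arithmetic.
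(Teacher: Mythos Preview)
Your proof is correct. The paper does not actually supply its own proof of this lemma; it simply imports the statement from \citet{wang2021provably}, and the AM--GM-on-eigenvalues argument you give, together with the trace bound from the feature normalization $\|\phi_h(s,a)\|_2\le 1$, is precisely the standard derivation of this bound.
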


\begin{lemma}[Lemma 12 of \citet{abbasi2011improved}]\label{lem:double}
	Suppose $A, B \in \mathbb{R}^{d\times d}$ are two positive definite matrices satisfying that $A\succcurlyeq B$, then for any $x \in \mathbb{R}^d$, we have
	$$\frac{\|x\|_A^2}{\|x\|_B^2}\leq \frac{\det(A)}{\det(B)}.$$
\end{lemma}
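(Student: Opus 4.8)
The plan is to reduce the stated inequality to a statement about the eigenvalues of a single symmetric matrix and then exploit the hypothesis $A \succcurlyeq B$. First I would note that it suffices to bound the worst-case Rayleigh-type ratio $\sup_{x \neq 0} \frac{x^\top A x}{x^\top B x}$: if this supremum is at most $\det(A)/\det(B)$, then the pointwise inequality $\|x\|_A^2/\|x\|_B^2 \leq \det(A)/\det(B)$ holds for every $x$ (the case $x=0$ being trivial, and $x^\top B x > 0$ for $x \neq 0$ since $B \succ 0$). Because $B$ is positive definite it admits a symmetric positive-definite square root $B^{1/2}$; substituting $y = B^{1/2} x$ rewrites the ratio as $\frac{y^\top M y}{y^\top y}$ with $M := B^{-1/2} A B^{-1/2}$, whose supremum over $y \neq 0$ is exactly $\lambda_{\max}(M)$ by the standard Rayleigh quotient characterization for symmetric matrices.

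The key step is then to analyze $M$ using the ordering hypothesis. Since $A \succcurlyeq B$, conjugating both sides by $B^{-1/2}$ preserves the ordering and gives $M = B^{-1/2} A B^{-1/2} \succcurlyeq B^{-1/2} B B^{-1/2} = I$, so $M$ is symmetric with every eigenvalue $\lambda_i(M) \geq 1$. Meanwhile the right-hand side transforms cleanly by multiplicativity of the determinant: $\det(A)/\det(B) = \det(B^{-1/2} A B^{-1/2}) = \det(M) = \prod_i \lambda_i(M)$.

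Finally I would combine the two observations. Factoring out the largest eigenvalue, $\det(M) = \lambda_{\max}(M)\cdot \prod_{i \,:\, \lambda_i \neq \lambda_{\max}} \lambda_i(M)$, and using that every remaining factor is at least $1$, I obtain $\det(M) \geq \lambda_{\max}(M)$. Chaining the inequalities then yields, for every $x \neq 0$, $\frac{\|x\|_A^2}{\|x\|_B^2} = \frac{x^\top A x}{x^\top B x} \leq \lambda_{\max}(M) \leq \det(M) = \frac{\det(A)}{\det(B)}$, which is exactly the claim.

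I do not expect a genuine obstacle here, as the argument is elementary linear algebra; the only points requiring mild care are justifying the existence and invertibility of $B^{1/2}$ (guaranteed by $B \succ 0$) and confirming that the supremum of the Rayleigh quotient is attained as $\lambda_{\max}(M)$. If one prefers to avoid the square-root substitution entirely, an equivalent route is to work directly with the generalized eigenvalues of the pencil $(A,B)$, i.e.\ the roots of $\det(A - \lambda B)=0$, which are the eigenvalues of $B^{-1}A$ and coincide with those of $M$; the same eigenvalue-product argument then applies verbatim.
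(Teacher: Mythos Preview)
Your argument is correct. The paper does not supply its own proof of this lemma; it is stated as an assisting technical lemma and attributed to \citet{abbasi2011improved} (their Lemma~12) without further justification. Your proof via the congruence $M=B^{-1/2}AB^{-1/2}\succcurlyeq I$ and the observation $\lambda_{\max}(M)\le\prod_i\lambda_i(M)=\det(M)$ is exactly the standard argument for this fact, so there is nothing to compare.

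One cosmetic remark: the line ``$\det(M) = \lambda_{\max}(M)\cdot \prod_{i : \lambda_i \neq \lambda_{\max}} \lambda_i(M)$'' is slightly awkward if $\lambda_{\max}$ has multiplicity greater than one; it is cleaner (and what you really use) to write $\det(M)=\prod_{i=1}^d\lambda_i(M)\ge \lambda_{\max}(M)\cdot 1^{d-1}$ directly, since every $\lambda_i(M)\ge 1$. This does not affect correctness.
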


\begin{lemma}[Elliptical Potential Lemma, Lemma 26 of \citet{agarwal2020flambe}]\label{lem:potential}
	Consider a sequence of $d\times d$ positive semi-definite matrices $X_1,\cdots,X_T$ with $\max_t Tr(X_t) \leq 1$ and define $M_0=I,\cdots,M_t=M_{t-1}+X_t$. Then 
	$$\sum_{t=1}^{T} Tr(X_t M_{t-1}^{-1})\leq 2d\log(1+\frac{T}{d}).$$
\end{lemma}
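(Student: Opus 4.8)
The plan is to use $\log\det(M_t)$ as a potential function and to charge each trace term $\mathrm{Tr}(X_t M_{t-1}^{-1})$ against the one-step increase $\log\det(M_t)-\log\det(M_{t-1})$. First I would write $M_t = M_{t-1}+X_t$ and factor the determinant as $\det(M_t)=\det(M_{t-1})\det(I+M_{t-1}^{-1/2}X_t M_{t-1}^{-1/2})$, using that $M_{t-1}\succ 0$ is invertible with a well-defined symmetric square root. Setting $Y_t := M_{t-1}^{-1/2}X_t M_{t-1}^{-1/2}\succeq 0$, the cyclic property of the trace gives $\mathrm{Tr}(Y_t)=\mathrm{Tr}(X_t M_{t-1}^{-1})$, so the entire argument reduces to comparing $\mathrm{Tr}(Y_t)$ with $\log\det(I+Y_t)$.

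The two key elementary inequalities are: (i) for any PSD matrix $Y$ with eigenvalues $\lambda_i\geq 0$, $\det(I+Y)=\prod_i(1+\lambda_i)\geq 1+\sum_i\lambda_i = 1+\mathrm{Tr}(Y)$; and (ii) the scalar bound $x\leq 2\log(1+x)$ valid for $x\in[0,1]$ (which follows since $\log(1+x)-x/2$ vanishes at $x=0$ and is nondecreasing on $[0,1]$). To invoke (ii) I must verify $\mathrm{Tr}(X_t M_{t-1}^{-1})\in[0,1]$: since $M_{t-1}=I+\sum_{s<t}X_s\succeq I$ we have $M_{t-1}^{-1}\preceq I$, hence $\mathrm{Tr}(X_t M_{t-1}^{-1})\leq \mathrm{Tr}(X_t)\leq 1$ by the hypothesis $\max_t\mathrm{Tr}(X_t)\leq 1$. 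Chaining (i) and (ii) gives the per-step bound $\mathrm{Tr}(X_t M_{t-1}^{-1})\leq 2\log\bigl(1+\mathrm{Tr}(X_t M_{t-1}^{-1})\bigr)\leq 2\bigl(\log\det(M_t)-\log\det(M_{t-1})\bigr)$.

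Summing over $t$ telescopes to $\sum_{t=1}^{T}\mathrm{Tr}(X_t M_{t-1}^{-1})\leq 2\bigl(\log\det(M_T)-\log\det(M_0)\bigr)=2\log\det(M_T)$ since $M_0=I$. It then remains to bound $\det(M_T)$: by AM--GM on the eigenvalues, $\det(M_T)\leq(\mathrm{Tr}(M_T)/d)^d$, and $\mathrm{Tr}(M_T)=d+\sum_{t=1}^{T}\mathrm{Tr}(X_t)\leq d+T$, giving $\det(M_T)\leq(1+T/d)^d$ and hence $\log\det(M_T)\leq d\log(1+T/d)$. Substituting produces the claimed bound $2d\log(1+T/d)$.

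The main obstacle is not any single step but assembling the correct potential: one must recognize that the additive trace quantity should be compared against the multiplicative determinant ratio, which is exactly what inequality (i) enables. The one subtlety I would be careful about is ensuring each trace term lies in $[0,1]$ so that the tight scalar bound $x\leq 2\log(1+x)$ applies; this crucially uses $M_{t-1}\succeq I$ (guaranteed by the initialization $M_0=I$) rather than merely $M_{t-1}\succ 0$, and it is the only place the normalization $\max_t\mathrm{Tr}(X_t)\leq 1$ enters.
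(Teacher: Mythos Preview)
Your proof is correct and is the standard potential-function argument for the elliptical potential lemma. Note that the paper does not actually supply its own proof of this statement: Lemma~\ref{lem:potential} is quoted as an assisting technical lemma from \citet{agarwal2020flambe} and is used as a black box in the regret analysis, so there is no paper proof to compare against. Your derivation---charging $\mathrm{Tr}(X_t M_{t-1}^{-1})$ against $\log\det(M_t)-\log\det(M_{t-1})$ via the chain $x\le 2\log(1+x)$ for $x\in[0,1]$ and $1+\mathrm{Tr}(Y)\le\det(I+Y)$, then bounding $\det(M_T)$ by AM--GM on the eigenvalues---is exactly the classical argument (see, e.g., \citet{abbasi2011improved}) and is precisely what the cited source does.
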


\end{document}